\documentclass{article}

\PassOptionsToPackage{numbers, compress}{natbib}
 \usepackage[preprint]{neurips_2026}

\usepackage[utf8]{inputenc} 
\usepackage[T1]{fontenc}    
\usepackage{hyperref}       
\usepackage{url}            
\usepackage{booktabs}       
\usepackage{nicefrac}       
\usepackage{microtype}      
\usepackage{xcolor}
\usepackage{tikz}
\usetikzlibrary{arrows.meta}
\usepackage[most]{tcolorbox}
\newtcbox{\subtitlebox}{on line, 
  arc=3pt, 
  colback=gray!5, 
  colframe=gray!40, 
  before upper=\strut, 
  boxrule=0.6pt, 
  boxsep=0pt, 
  left=4pt, right=4pt, top=2pt, bottom=2pt,
  fontupper=\small \sffamily\bfseries
}

\newtcbox{\subtitleboxblack}{on line, 
  arc=3pt,                
  colback=black,          
  colframe=black,         
  colupper=white,         
  boxrule=0.6pt, 
  boxsep=0pt, 
  left=4pt, right=4pt, top=2pt, bottom=2pt,
  fontupper=\small\sffamily\bfseries 
}

\definecolor{okStroke}{HTML}{15803D}
\definecolor{okFill}{HTML}{D1FAE5}
\definecolor{okText}{HTML}{14532D}

\definecolor{badStroke}{HTML}{B91C1C}
\definecolor{badFill}{HTML}{FECACA}
\definecolor{badText}{HTML}{7F1D1D}

\newcommand{\oksign}[1][1.0]{%
  \tikz[baseline=-0.6ex,scale=#1]{
    \node[
      circle,
      draw=okStroke,
      fill=okFill,
      line width=0.9pt,
      minimum size=2.2ex,
      inner sep=0pt
    ] {\textcolor{okText}{\bfseries\scriptsize$\checkmark$}};
  }%
}

\newcommand{\xsign}[1][1.0]{%
  \tikz[baseline=-0.6ex,scale=#1]{
    \node[
      circle,
      draw=badStroke,
      fill=badFill,
      line width=0.9pt,
      minimum size=2.2ex,
      inner sep=0pt
    ] {\textcolor{badText}{\bfseries\scriptsize$\times$}};
  }%
}
\usepackage{graphicx}
\usepackage{subcaption}     

\usepackage{amsfonts, amsmath, amssymb, amsthm}
\usepackage{cleveref}
\usepackage{mathtools}
\usepackage{float}

\theoremstyle{plain}
\newtheorem{theorem}{Theorem}[section]

\newtheorem{corollary}[theorem]{Corollary}
\theoremstyle{definition}

\newtheorem{assumption}[theorem]{Assumption}

\newcommand{\E}{\mathbb{E}}
\newcommand{\Var}{\mathrm{Var}}

\title{Amplifying Membership Signal\\ Through Chained Regeneration}

\author{%
  Wojciech Łapacz\thanks{Equal contribution.}\textsuperscript{~~~}\thanks{Contact: \texttt{wojciech.lapacz02@gmail.com}}\\
  Warsaw University of Technology\\
  \And
  Stanisław Pawlak\footnotemark[1] \\
  Warsaw University of Technology
}

\begin{document}

\maketitle

\begin{abstract}

  The tendency of large generative models to memorize training data makes sample verification critical for privacy auditing and copyright enforcement. Current membership (MIA) and dataset inference (DI) attacks often rely on one-shot generations, which yield weak signals and limited sensitivity across modalities. Inspired by Model Autophagy Disorder (MAD), we introduce MADreMIA, a model-agnostic framework that enhances white-, gray-, and black-box MIA and DI. Rather than relying on shadow model training -- often infeasible for large generative models -- our framework facilitates scalable inference by leveraging inherent signals through iterative trajectories. This process utilizes chained generations across diverse modalities, where each output serves as the subsequent input, to improve membership evidence at low FPR. We demonstrate that memorized training samples exhibit significantly higher coherence and slower degradation during iterative regeneration than non-member generations. Our results show that MADreMIA provides richer signals across diverse model families and modalities; we present comprehensive evaluations for IARs, diffusion, and language models, alongside preliminary results demonstrating its potential for audio models.
\end{abstract}

\section{Introduction}
The rapid development of generative AI triggered a pressing demand for training data, frequently leading to the unauthorized ingestion of private, sensitive, or copyrighted content. Consequently, with the scaling of generative models the importance of Membership inference attacks (MIAs)~\citep{shokri2017mia} and dataset inference (DI)~\cite{maini2021di} has become critical. Practical auditing -- ranging from protecting medical privacy~\cite{zhang2022healthmia} to identifying licensed content~\citep{dubinski2025cdi} or detecting benchmark contamination~\citep{maini2024did,singh2024evaluation,zawalski2026codec} -- requires determining whether specific samples or datasets were used to shape a model's parameters. The definitive test is whether a model retains a structural ''echo'' of its training data, manifesting itself as a high-fidelity memorization signal that can be surfaced through targeted inference. Existing auditing methods, however, face a significant bottleneck. Most extract evidence from a single query~\citep{zhang2024minkpp,wu2024yoqo} or a set of loosely coupled samples~\citep{choquette2021labelonlymia}. These one-shot signals are often fragile; recent evaluations on unbiased benchmarks show that many MIAs degrade significantly under distributional shifts, often performing only slightly better than random guessing~\cite{maini2024did,dubinski2025cdi}. Furthermore, high-performance ''shadow model'' attacks~\cite{ye2022enhanced,carlini2021zlib} -- which require training multiple auxiliary models to simulate the target -- are computationally expensive and impractical for real-world large-scale generative architectures.

\begin{figure}[t]
  \centering
  \includegraphics[width=0.9\textwidth]{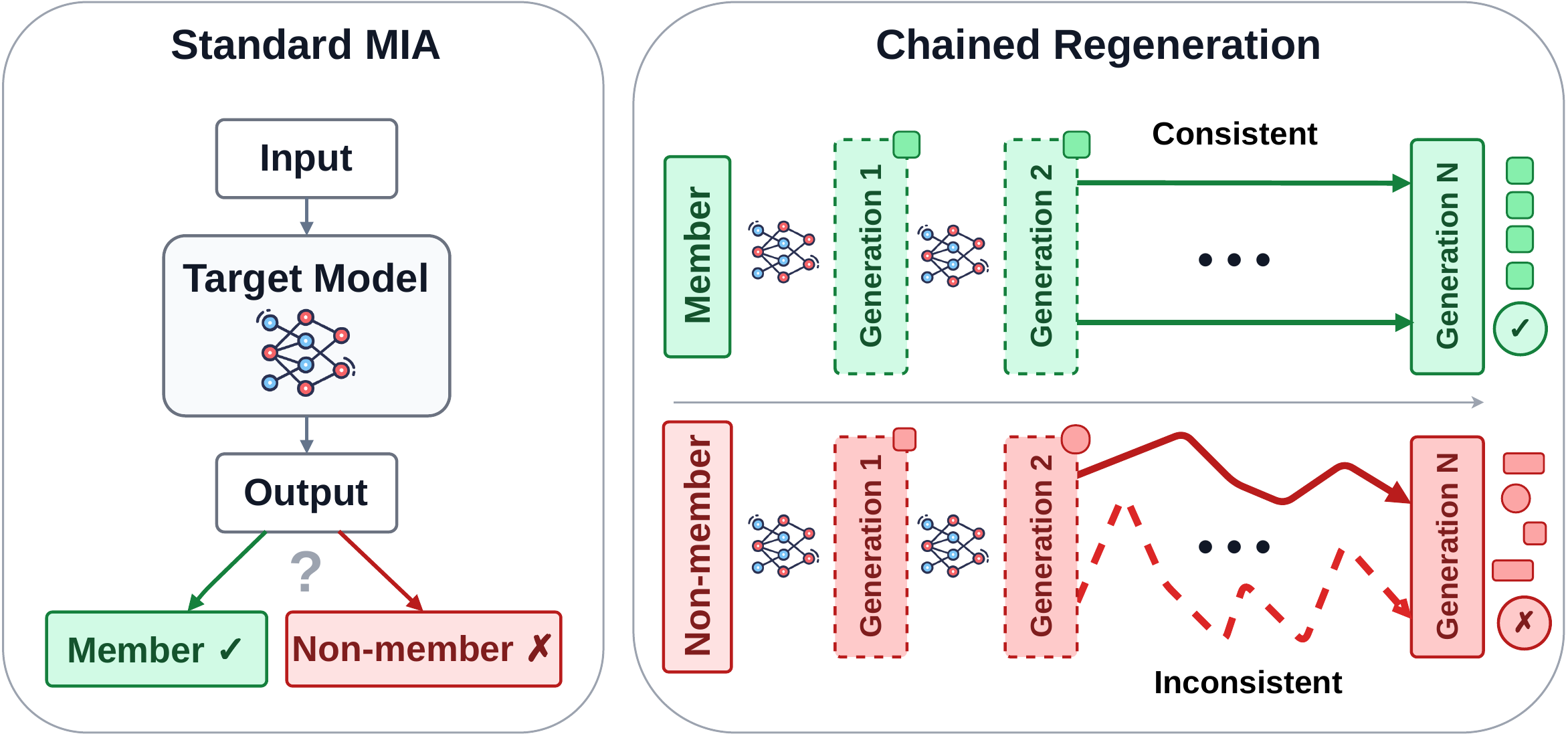}
  \caption{Comparison between conventional one-shot membership inference attack and our chained-generation approach. The former use a single query, which yields a weak signal that often fails to separate members from non-members. In the latter, each generation informs the next query, progressively amplifying membership evidence and improving separability: re-members \oksign~are more coherent and degrade slower than re-non-members \xsign.}
  \label{fig:teaser}
\end{figure}

To address these limitations, we shift the perspective from a single static query to a dynamic trajectory. This concept is best illustrated through a forensic parallel: in a criminal interrogation, a suspect may maintain a lie for a single response, but that lie often collapses under the pressure of repeated, recursive follow-up questions. A truthful narrative, by contrast, remains coherent because it is grounded in a fixed reality. We argue that generative models exhibit a similar phenomenon -- their ''truth'' is the training set. While a model can produce a plausible-looking output for a non-member sample once, it may struggle to sustain that plausibility over a recursive chain of self-generated inputs.

Our framework, \textbf{MADreMIA}, is inspired by the mechanics of Model Autophagy Disorder (MAD)~\citep{alemohammad2023mad,shumailov2024mad2}. Traditionally, MAD describes a failure mode where models trained on their own synthetic outputs progressively lose variance and collapse into a state of degenerated ''madness''.  We pivot this phenomenon into a diagnostic, interference-time tool: if a sample was present during training, it acts as a stable ``attractor'' in the model’s latent space. By repeatedly feeding a model’s outputs back into itself -- creating an iterative regeneration chain -- we can amplify the signal of memorization. 

Within this framework, we distinguish between two types of trajectories:
\begin{itemize}
    \vspace{-0.2cm}
    \item \colorbox{okFill}{\textbf{Re-members}}: These are member samples (training data) that are iteratively \textbf{re}-generated. Because the model has ''memorized'' these points, they exhibit high stability and slow semantic degradation over time.
    \item \colorbox{badFill}{\textbf{Re-non-members}}: These are unseen samples that are iteratively \textbf{re}-generated. Lacking a structural anchor in the model's weights, these samples drift rapidly toward the model's average biases or dissolve into noise (see Figure~\ref{fig:teaser}).
     \vspace{-0.2cm}
\end{itemize}

MADreMIA functions as a modular, inference-time add-on that is intentionally method-, model-, and modality-agnostic. By measuring consistency across recursive loops, we provide richer signals across diverse architectures, including image autoregressive models (IARs), diffusion models (DMs), large language models (LLMs), and audio voice conversion models. We demonstrate that while a single output is often too noisy to be decisive, the trajectory of a ``re-member'' is different than ``re-non-member'' and thus acts as a powerful signal amplifier, surfacing traces of training data that are otherwise invisible.

MADreMIA iterative procedure moves beyond one-shot plausibility by probing whether the model preserves semantic and structural consistency under repeated self-interaction. Consequently, this work investigates a central research question: \emph{Can the dynamics of recursive self-generation serve as a signal amplifier to expose training data membership?}

In summary, the main contributions of our paper are:
\begin{itemize}
    \item We introduce an \textbf{iterative regeneration setup} to uncover data memorization invisible during single-pass inference.
    \item We show theoretically and empirically that \textbf{trajectory features} (generation dynamics over time) yield a significantly more statistically robust membership signal. By functioning as a variance reduction mechanism, these features isolate the underlying membership information much more effectively than standard one-shot baselines.
    \item We propose an \textbf{inference-time, cross-modal framework} that improves Membership and Dataset Inference efficiency across Vision and Language models without the need for expensive shadow model training.
\end{itemize}

\section{Related Works}

\paragraph{Memorization.} Memorization in generative models — the tendency to reproduce training examples rather than generate novel samples — has been studied across multiple model families. Early work formalized the distinction between memorization, mode collapse, and overfitting~\citep{burg2021memorization1}, while subsequent studies characterized the generalization-to-memorization transition in diffusion models~\citep{gu2023memorization4}, localized it through attention patterns~\citep{sakarvadia2024memorization3}, and showed that standard evaluation metrics fail to surface it~\citep{bai2021memorization2}. Mitigation strategies have been proposed for both LLMs~\citep{hans2024memorization5} and text-to-image models~\citep{Chen2025memorization7}.
\vspace{-0.1cm}
\paragraph{Membership and Dataset Inference.} Individual Membership Inference Attacks (MIAs) can be confounded by distribution shifts~\citep{maini2024did}, prompting a shift toward Dataset Inference, which aggregates evidence across many samples~\citep{maini2025reassessing,dubinski2025cdi,kowalczuk2025privacy}. Shadow-model approaches~\citep{ye2022enhanced,carlini2021zlib} are now computationally infeasible for large architectures, so modern attacks extract signals from limited black-box outputs~\citep{zhang2024minkpp,chang2025camia,tao2025informia}. Most relevant to our work, \citet{li2024towards} performs MIAs on diffusion models by repeatedly perturbing a target image and comparing averaged outputs to the original — but since queries are independent and do not evolve with model responses, deeper structural memorization remains unexploited.
\vspace{-0.1cm}
\paragraph{Model Collapse.} Recursive self-training in generative models leads to progressive quality and diversity degradation when insufficient real data is injected — a phenomenon termed Model Autophagy Disorder~\citep{alemohammad2023mad}. Training on model-generated data further causes tails of the original distribution to disappear~\citep{shumailov2024mad2}. Together, these works suggest that iterative generation is structurally revealing: memorized regions may persist differently from non-member examples under repeated reuse. Our method turns this insight into a privacy-auditing mechanism, \textbf{exploiting chained regeneration at inference time} to amplify membership-relevant differences rather than treating collapse as a training-time pathology. The extended related works section can be found in Appendix~\ref{app:related}.

\section{Theory of Trajectory-Based Signal Amplification}
\label{sec:general-theory}

For each sample, we define an iterative trajectory $Z_0, Z_1, \dots, Z_T$, where $Z_0$ is the observed sample and $Z_{t+1}$ is produced by one regeneration step. Let $M \in \{0,1\}$ denote membership. Define a per-step score $\phi_t := \phi(Z_t, Z_{t+1})$ and the average $S_T := \frac{1}{T}\sum_{t=0}^{T-1}\phi_t$. The attack predicts $M$ from $S_T$. We use $a_T \gtrsim b_T$ when $a_T \ge c\,b_T$ for a constant $c > 0$ independent of $T$, and $a_T \asymp b_T$ for two-sided bounds.

\begin{assumption}[Signal and Noise]
\label{assumptions}
(A1) There exists a sequence $(\Delta_t \ge 0)$ such that $\E[\phi_t \mid M=1] - \E[\phi_t \mid M=0] \ge \Delta_t$. (A2) $\max_m \sup_t \Var(\phi_t \mid M=m) \le \sigma^2 < \infty$. (A3) The centered process $\tilde{\phi}_t := \phi_t - \E[\phi_t \mid M]$ is geometrically mixing with effective autocorrelation time $\tau_{\mathrm{eff}}$, implying $\Var(S_T \mid M) \le C\frac{\sigma^2\tau_{\mathrm{eff}}}{T}$.
\end{assumption}

\begin{theorem}[Trajectory Averaging]
\label{thm:trajectory-amplification}
Under A1--A3, the signal $\Gamma_T := |\E[S_T \mid M=1] - \E[S_T \mid M=0]|$ and SNR satisfy:
\[
\Gamma_T \ge \frac{1}{T}\sum_{t=0}^{T-1}\Delta_t, \quad \mathrm{SNR}^2(S_T) := \frac{\Gamma_T^2}{\max_m \Var(S_T \mid M=m)} \ge \frac{(\frac{1}{T}\sum \Delta_t)^2}{C\sigma^2\tau_{\mathrm{eff}}/T}.
\]
\end{theorem}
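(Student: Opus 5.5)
The proof follows directly from linearity of expectation combined with the variance bound that A3 supplies; the only work is bookkeeping.

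\emph{Signal bound.} By linearity of conditional expectation, $\E[S_T \mid M=m] = \frac{1}{T}\sum_{t=0}^{T-1}\E[\phi_t \mid M=m]$ for each $m\in\{0,1\}$. Subtracting the two gives
\[
\E[S_T \mid M=1]-\E[S_T \mid M=0] = \frac{1}{T}\sum_{t=0}^{T-1}\bigl(\E[\phi_t\mid M=1]-\E[\phi_t\mid M=0]\bigr) \ge \frac{1}{T}\sum_{t=0}^{T-1}\Delta_t ,
\]
where the inequality applies (A1) term by term. Since each $\Delta_t\ge 0$, the right-hand side is nonnegative. The signed difference is therefore a lower bound for its absolute value $\Gamma_T$, which gives the first claim. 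All conditional means are finite by (A2), so these manipulations are legitimate.

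\emph{Noise bound.} (A3) gives $\Var(S_T\mid M=m)\le C\sigma^2\tau_{\mathrm{eff}}/T$ for each $m$, so the maximum over $m\in\{0,1\}$ obeys the same bound. If one wanted to justify A3's implication rather than take it as given, the route is to expand $\Var(S_T\mid M) = T^{-2}\sum_{s,t}\mathrm{Cov}(\tilde\phi_s,\tilde\phi_t\mid M)$. Under geometric mixing, Davydov-type inequalities bound $|\mathrm{Cov}(\tilde\phi_s,\tilde\phi_t)| \le c\,\sigma^2\rho^{|s-t|}$. Summing the geometric series then yields $T^{-1}\sigma^2\cdot O\bigl(\sum_k\rho^k\bigr)$, and $\sum_k \rho^k$ is what $\tau_{\mathrm{eff}}$ encodes.

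This step is the only real subtlety. Strictly, mixing-covariance inequalities need slightly more than second moments: for $\alpha$-mixing they require a $2+\delta$ moment, while for $\rho$-mixing a bounded variance suffices. The statement, however, packages the conclusion into (A3) itself, so I will simply invoke it.

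\emph{SNR.} Combining the two bounds, the numerator satisfies $\Gamma_T^2 \ge (\frac1T\sum\Delta_t)^2$, because $\Gamma_T \ge \frac1T\sum\Delta_t\ge 0$ and squaring preserves the order of nonnegative numbers. The denominator $\max_m\Var(S_T\mid M=m)$ is at most $C\sigma^2\tau_{\mathrm{eff}}/T$. A lower bound on the numerator divided by an upper bound on the denominator gives the stated inequality.

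One degenerate case needs a remark. If $\max_m \Var(S_T\mid M=m)=0$, the SNR is $+\infty$ by convention, or equal to $0/0$ when also $\Gamma_T=0$, in which case all $\Delta_t=0$ and the bound reads $0\ge 0$. I will therefore note that we assume a positive variance, or adopt that convention.

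Finally, I will observe the consequence. When $\Delta_t\ge\Delta>0$ on a nonvanishing fraction of steps, the bound gives $\mathrm{SNR}^2 \gtrsim T\Delta^2/(\sigma^2\tau_{\mathrm{eff}})$, which grows linearly in $T$.
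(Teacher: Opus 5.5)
Your proof is correct and follows the paper's argument. Both use linearity of conditional expectation with (A1) applied termwise for the signal bound, nonnegativity of the $\Delta_t$ to pass to $\Gamma_T$, and (A3) taken as given for the variance bound, then divide the squared lower bound on the numerator by the upper bound on the denominator; your extra remarks on the zero-variance case and on deriving (A3) from mixing covariance inequalities go beyond what the paper does but are not needed.
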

\textit{Interpretation.} Multi-step attacks improve when mean signal decays slowly relative to variance reduction. 

\begin{corollary}[Exponential Leakage]
\label{cor:exp-leakage}
If $\Delta_t = \Delta_0 e^{-t/\tau_g}$, then $\Gamma_T \ge \Delta_0\frac{1-e^{-T/\tau_g}}{T/\tau_g}$. If $\Gamma_T \asymp \Delta_0\frac{1-e^{-T/\tau_g}}{T/\tau_g}$, then $\mathrm{SNR}^2(S_T)\gtrsim g(T/\tau_g)$ where $g(x):=\frac{(1-e^{-x})^2}{x}$. The maximizer $x^\star \approx 1.2564$ yields an optimal $T^\star \approx 1.2564\,\tau_g$.
\end{corollary}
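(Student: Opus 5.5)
The plan is to specialize Theorem~\ref{thm:trajectory-amplification} to the geometric profile $\Delta_t=\Delta_0 e^{-t/\tau_g}$ and then optimize a one-variable function. There are three steps: evaluate the average signal in closed form, substitute it into the SNR bound, and locate the maximizer of $g$.

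\textbf{Step 1 (the signal bound).} By Theorem~\ref{thm:trajectory-amplification}, $\Gamma_T\ge \frac1T\sum_{t=0}^{T-1}\Delta_0 e^{-t/\tau_g}$. I would bound this discrete geometric sum below by the integral $\int_0^T e^{-s/\tau_g}\,ds=\tau_g(1-e^{-T/\tau_g})$. This works because $e^{-s/\tau_g}$ is decreasing, so the left Riemann sum dominates the integral: $e^{-t/\tau_g}\ge\int_t^{t+1}e^{-s/\tau_g}ds$ for each $t$. Dividing by $T$ gives $\Gamma_T\ge \Delta_0\frac{1-e^{-T/\tau_g}}{T/\tau_g}$. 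Alternatively one can sum exactly, $\frac{1-e^{-T/\tau_g}}{1-e^{-1/\tau_g}}$, and use $1-e^{-1/\tau_g}\le 1/\tau_g$ to reach the same bound.

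\textbf{Step 2 (the SNR bound).} Under the two-sided hypothesis $\Gamma_T\asymp\Delta_0\frac{1-e^{-x}}{x}$ with $x=T/\tau_g$, I would combine the signal with the variance bound in (A3), $\max_m\Var(S_T\mid M=m)\le C\sigma^2\tau_{\mathrm{eff}}/T$. This gives
\[
\mathrm{SNR}^2(S_T)\ge c\,\frac{\Delta_0^2(1-e^{-x})^2/x^2}{C\sigma^2\tau_{\mathrm{eff}}/(x\tau_g)}=c\,\frac{\Delta_0^2\tau_g}{C\sigma^2\tau_{\mathrm{eff}}}\,g(x).
\]
The prefactor is independent of $T$, so $\mathrm{SNR}^2(S_T)\gtrsim g(T/\tau_g)$.

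\textbf{Step 3 (the maximizer).} Differentiating $\log g(x)=2\log(1-e^{-x})-\log x$ and setting the derivative to zero gives $\frac{2e^{-x}}{1-e^{-x}}=\frac1x$, which simplifies to $2x=e^{x}-1$, or equivalently $e^x=1+2x$. I would show this equation has a unique positive root. The function $h(x)=e^x-1-2x$ is convex with $h(0)=0$ and $h'(0)<0$, so it has exactly one positive zero. At that root $g$ attains its maximum, since $g\to0$ both as $x\to0^+$ (where $g\sim x$) and as $x\to\infty$. Newton iteration then locates the root at $x^\star\approx1.2564$, giving $T^\star\approx1.2564\,\tau_g$.

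The main obstacle is interpretive rather than technical. The optimality claim concerns the lower bound $g$, not the true SNR, so it is only meaningful under the $\asymp$ hypothesis. I will therefore state explicitly that $T^\star$ maximizes the guaranteed bound. Treating $T$ as continuous in the optimization is a second point to flag, and I expect it to be harmless.
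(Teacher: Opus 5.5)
Your proposal is correct and matches the paper's proof step for step: the paper uses your alternative route in Step 1 (the exact geometric sum together with $1-e^{-1/\tau_g}\le 1/\tau_g$), substitutes into the theorem's SNR bound with $x=T/\tau_g$, and reduces $g'(x)=0$ to $e^x=2x+1$. Your convexity argument for a unique positive root and your check that $g\to 0$ at both ends supply the uniqueness and maximality that the paper only asserts.
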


\begin{corollary}[Amplification Gain]
\label{cor:sqrt-kappa}
Let $\kappa := \tau_g/\tau_{\mathrm{eff}}$. At $T=T^\star$, the gain over the single-step baseline $S_1$ is $\frac{\mathrm{SNR}(S_{T^\star})}{\mathrm{SNR}(S_1)} \gtrsim c\sqrt{\kappa}$, with $c \approx 0.638$.
\end{corollary}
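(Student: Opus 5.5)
The plan is to compare a lower bound on $\mathrm{SNR}(S_{T^\star})$ from Theorem~\ref{thm:trajectory-amplification} and Corollary~\ref{cor:exp-leakage} with an \emph{upper} bound on the one-step baseline $\mathrm{SNR}(S_1)$. The constant $c$ will then appear as $\sqrt{g(x^\star)}$.

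\textbf{Step 1: the trajectory term.} Write $T = x\tau_g$. By Theorem~\ref{thm:trajectory-amplification}, together with the two-sided scaling $\Gamma_T \asymp \Delta_0 (1-e^{-x})/x$ assumed in Corollary~\ref{cor:exp-leakage},
\[
\mathrm{SNR}^2(S_T) \;\gtrsim\; \frac{\Delta_0^2 (1-e^{-x})^2/x^2}{C\sigma^2\tau_{\mathrm{eff}}/(x\tau_g)} \;=\; \frac{\Delta_0^2}{C\sigma^2}\,\kappa\, g(x).
\]
Evaluating at $x=x^\star\approx 1.2564$ gives $e^{-x^\star}\approx 0.2847$. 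Hence $g(x^\star)\approx (0.7153)^2/1.2564\approx 0.407$, and $\sqrt{g(x^\star)}\approx 0.638$.

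\textbf{Step 2: the baseline term.} For $T=1$, $S_1=\phi_0$ and $\Gamma_1 = |\E[\phi_0\mid M=1]-\E[\phi_0\mid M=0]|$. Theorem~\ref{thm:trajectory-amplification} only gives a lower bound on $\mathrm{SNR}(S_1)$, so bounding the ratio from below requires an upper bound on the baseline. I would read the hypothesis ``$\Gamma_T \asymp \cdots$'' at $T=1$ as giving $\Gamma_1 \lesssim \Delta_0$. I would also take the variance bound (A2) to be attained up to constants at $t=0$, i.e.\ $\max_m \Var(\phi_0\mid M=m) \gtrsim \sigma^2$; this is the natural reading that makes $\sigma^2$ the relevant noise scale. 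Then
\[
\mathrm{SNR}^2(S_1) \lesssim \frac{\Delta_0^2}{\sigma^2}.
\]

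\textbf{Step 3: take the ratio.} Dividing Step 1 by Step 2, the factors $\Delta_0^2/\sigma^2$ cancel, leaving
\[
\frac{\mathrm{SNR}^2(S_{T^\star})}{\mathrm{SNR}^2(S_1)} \gtrsim \frac{\kappa\, g(x^\star)}{C}.
\]
Taking square roots gives $\gtrsim \sqrt{g(x^\star)}\sqrt{\kappa}$. The mixing constant $C$ and the implicit constants from the $\asymp$ and $\lesssim$ relations are absorbed into $\gtrsim$; the stated value $c\approx 0.638$ is exact when all these constants equal one.

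There is a small technical point: $T^\star$ must be an integer. Replacing $x^\star\tau_g$ by its rounding changes $x$ by at most $1/\tau_g$. Since $g$ is smooth and has a maximum at $x^\star$, $g$ changes only by $O(\tau_g^{-2})$, which is negligible in the regime $\tau_g\gg 1$ where amplification is meaningful; otherwise it is absorbed into $c$.

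\textbf{Main obstacle.} I expect the hard step to be Step 2. The earlier results provide only lower bounds on signal-to-noise, whereas a gain statement needs the baseline to be no better than $\Delta_0/\sigma$. I would make this explicit as a matching (tightness) condition on $\Gamma_1$ and $\Var(\phi_0\mid M)$. I would justify it as the one-step specialization of the same two-sided hypothesis already invoked in Corollary~\ref{cor:exp-leakage}, rather than attempting to derive it from A1--A3 alone, which does not seem possible.
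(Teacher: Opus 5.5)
Your proposal is correct and follows essentially the paper's argument. You evaluate the Corollary~\ref{cor:exp-leakage} bound $\mathrm{SNR}^2(S_{T^\star})\gtrsim \kappa\,g(x^\star)\Delta_0^2/\sigma^2$ at $x^\star$, impose a baseline comparability condition so that $\mathrm{SNR}(S_1)\lesssim\Delta_0/\sigma$ (the paper simply assumes $\Gamma_1\asymp\Delta_0$ and $\Var(S_1\mid M)\asymp\sigma^2$), and take the ratio to get $\sqrt{g(x^\star)}\sqrt{\kappa}$ with $\sqrt{g(x^\star)}\approx 0.638$. Your remark that only the upper directions are needed is a slight sharpening of the paper's extra assumption, not a different route: $\Gamma_1\lesssim\Delta_0$ already follows from the two-sided hypothesis at $T=1$ because $\tau_g(1-e^{-1/\tau_g})\le 1$, so only the variance lower bound is genuinely new.
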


It is worth noting that we do not claim that trajectory iteration increase the Bayes information ceiling $I(M; Z_0)$; No, instead it improves practical fixed-form statistics via temporal variance reduction.

This theory applies to any iterative protocol satisfying A1--A3. Theorem~\ref{thm:trajectory-amplification} provides a conditional amplification guarantee. We present proofs in the Appendix~\ref{app:proofs-general-theory}.

\section{Method}
\label{sec:madremia-method}

MADreMIA is a trajectory-augmentation framework for privacy inference on generative models. It is designed as an any-box extension of standard one-shot attacks (MIA/DI): black-box by default, gray-box when richer outputs are available, and white-box when needed. The central design principle is to keep the downstream scorer unchanged and improve only its input representation through additional trajectory-derived evidence.

\paragraph{Unified setup.}
Following Sec.~\ref{sec:general-theory}, for each queried sample we construct
\[
Z_0, Z_1, \dots, Z_T,\qquad
Z_{t+1}=\mathcal{R}(f, Z_t),\; t=0,\dots,T-1,
\]
where $Z_0=x$ is the queried sample, $f$ is the audited generator, and $\mathcal{R}$ is a modality-specific regeneration operator executed under a fixed protocol. For MIA, the label is $M\in\{0,1\}$ (member/non-member). For DI, we use an analogous binary label $D\in\{0,1\}$ (in-target-dataset/out-of-target-dataset).


\paragraph{Threat model.}
MADreMIA supports: black-box (query access to $f$ outputs only), gray-box (query access plus output-level statistics such as loss/log-probability signals), and white-box (optional access to internals/gradients when available). In all cases, the adversary/auditor has no access to training data identities (labels), performs at most $T$ regeneration steps per sample, and outputs a binary prediction via $h$: $M$ for MIA or $D$ for DI.

\paragraph{Base one-shot signal.}
The theory defines
$\phi_t := \phi(Z_t, Z_{t+1}),\qquad
S_T := \frac{1}{T}\sum_{t=0}^{T-1}\phi_t.$

A trajectory one-shot comparator corresponds to the $T=1$ case (using $\phi_0=\phi(Z_0,Z_1)$). When available, we additionally report classical one-shot baselines $z_{\mathrm{base}}=\phi_{\mathrm{base}}(Z_0)$. Importantly, for each modality/model, the orientation (sign) of $\phi_t$ is fixed on train data only (equivalently $\phi_t$ or $-\phi_t$) and then frozen for test-time evaluation.

\paragraph{Signals and Fusion.} MADreMIA augments one-shot evidence with trajectory summaries computed from $(Z_0,\dots,Z_T)$. We define
\[
z_{\mathrm{base}}=\phi_{\mathrm{base}}(Z_0)\in\mathbb{R}^{d},\qquad
z_{\mathrm{traj}}=\psi(Z_0,\dots,Z_T)\in\mathbb{R}^{k},
\]
Here $\psi$ aggregates temporal statistics aligned with the $\phi_t$ process (e.g., drift, consistency, quality evolution, diversity, score decay, and summaries derived from $\{\phi_t\}_{t=0}^{T-1}$ and $S_T$). The fused representation is $\tilde z=[z_{\mathrm{base}}\|z_{\mathrm{traj}}]\in\mathbb{R}^{d+k},$
and the final attack score is $ s(Z_0)=h(\tilde z), $
with $h$ a calibrated scorer.  By default, following~\citet{kowalczuk2025privacy}, $h$ is an L1-regularized logistic regression fit as a plug-in estimator of $P(M=1\mid \tilde z)$.

\paragraph{Mechanism.} MADreMIA leverages the fact that members often exhibit slower average drift than non-members. Memorized samples typically lie in deeper local probability wells, causing iterative regenerations to remain closer to $Z_0$. Gains represent fixed-statistic SNR improvements consistent with the DPI: $I(M;\tilde z)\le I(M;Z_0)$.

\subsection{Modality-specific instantiations}
\label{sec:madremia-modalities}

\paragraph{Image autoregressive models (IARs) and diffusion models.}
$\mathcal{R}$ is image-to-image regeneration under fixed controls (autoregressive decoding for IARs; controlled re-noise/re-denoise for diffusion, i.e., partial forward noising to a fixed noise level followed by reverse denoising under fixed scheduler/settings). Trajectory features are defined relative to $Z_0$, in particular
$\mathrm{MSE}(Z_0,Z_t)$, $\mathrm{LPIPS}(Z_0,Z_t)$~\cite{zhang2018lpips}, and $\mathrm{SSIM}(Z_0,Z_t)$~\cite{wang2004ssim}.

\paragraph{Large language models (LLMs).}
$\mathcal{R}$ is an autophagous text loop where each generation is fed back as the next prompt/input under a fixed template, fixed context-window policy (with left-sided truncation to keep only the newest text), and fixed decoding configuration. We use multiple features to measure the quality and diversity of generations, specifically: Kullback-Leibler Divergence, Jensen-Shannon Divergence, Jaccard Index, Predictive Entropy, and Logit Margin:
\[
\mathrm{KLD}(Z_0,Z_t),\quad
\mathrm{JSD}(Z_0,Z_t),\quad
\mathrm{Jaccard}(Z_0,Z_t), \quad
\mathrm{Entropy}(Z_t),\quad
\mathrm{LogitMargin}(Z_t),
\]
for $t\in\{1,\dots,T\}$. These are summarized along the trajectory and fused with $z_{\mathrm{base}}$.
For clarity, KLD/JSD are computed on aligned token-distribution vectors: in gray/white-box settings from next-token logits, and in black-box settings from smoothed empirical token-frequency distributions under a fixed tokenizer/vocabulary. In fact, metrics in our experiments follow the gray-box setting, but our framework itself \textbf{is open to the black-box setting as well}. A black-box setting requires repeated queries per step to estimate distributions. Jaccard is computed on token sets after the same fixed preprocessing. More information about features for vision and language models are provided in Appendix~\ref{app:metrics}.

\paragraph{Audio generative models.}
In the audio domain, $\mathcal{R}$ employs iterative reconstruction loops. Notably, we do not conduct a full Membership or Dataset Inference evaluation for audio models, as the literature currently lacks proper audio benchmarks and specialized attacks tailored to the voice conversion setting.  Nevertheless, to demonstrate the cross-modal generality of our framework, our first experiment explores this potential using an objective audio fidelity metric.

Across all modalities, MADreMIA follows the pipeline: $Z_0 \rightarrow (Z_{0:T}) \rightarrow (\phi_{0:T-1}, S_T, z_{\mathrm{traj}}) \rightarrow \tilde z \rightarrow s(Z_0)$.

\begin{figure}[t!]
\centering
\begin{minipage}[t]{0.36\textwidth}
    \centering
    \subtitleboxblack{\textbf{Images}\hspace{0.5em}\emph{(black-box)}}
    \vspace{0.3em}

    \begin{subfigure}[b]{0.49\linewidth}
        \includegraphics[width=\linewidth]{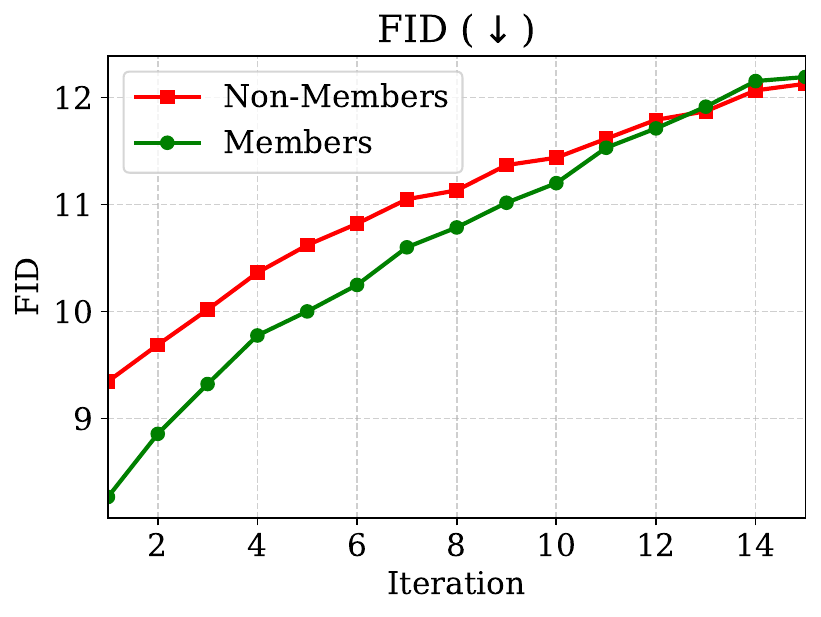}
        \caption{RAR-XXL}
    \end{subfigure}
    \hfill
    \begin{subfigure}[b]{0.49\linewidth}
        \includegraphics[width=\linewidth]{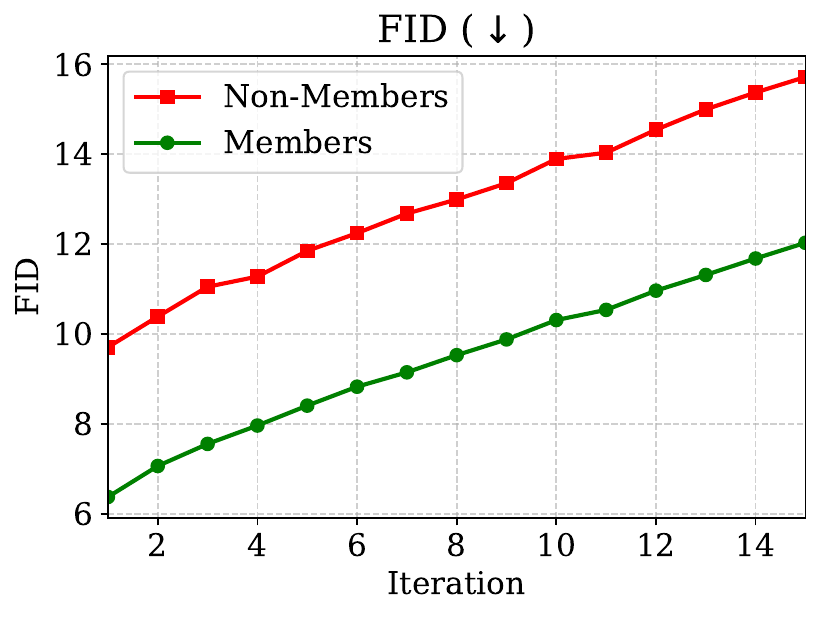}
        \caption{VAR-d30}
    \end{subfigure}
    \vspace{0.3em}

    \begin{subfigure}[b]{0.49\linewidth}
        \includegraphics[width=\linewidth]{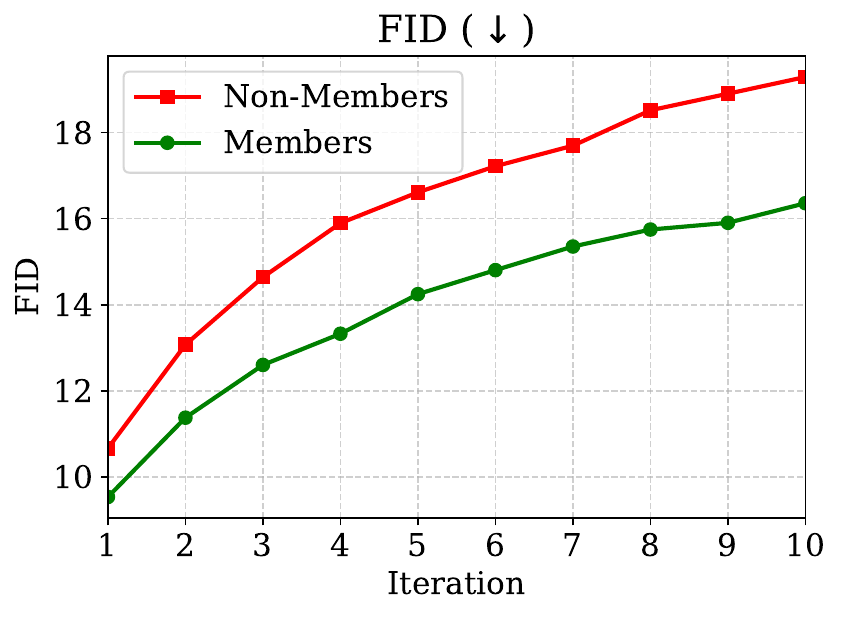}
        \caption{DiT-MoE-G}
    \end{subfigure}
    \hfill
    \begin{subfigure}[b]{0.49\linewidth}
        \includegraphics[width=\linewidth]{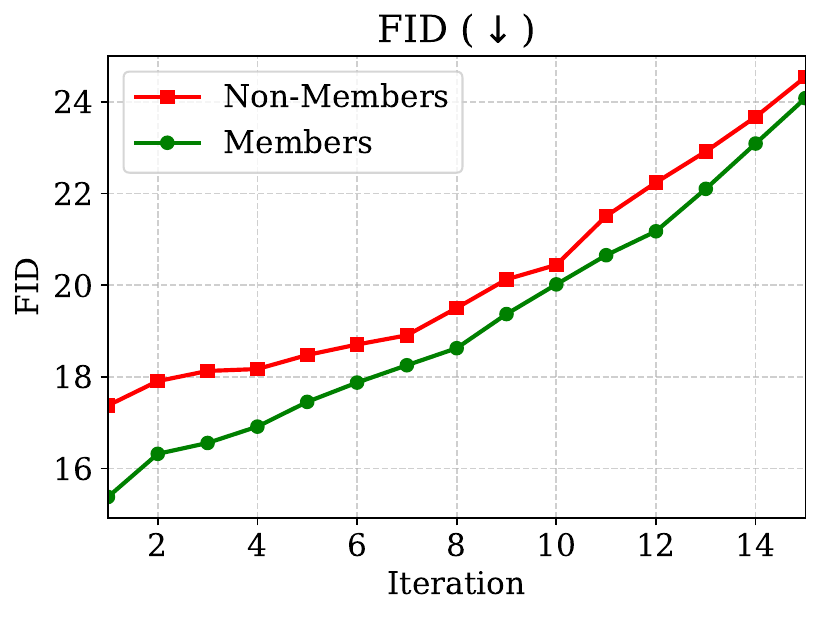}
        \caption{UViT-T2I-Deep}
    \end{subfigure}
\end{minipage}
\hfill
\begin{minipage}[t]{0.36\textwidth}
    \centering
    \subtitlebox{\textbf{Text}\hspace{0.5em}\emph{(grey-box)}}
    \vspace{0.3em}

    \begin{subfigure}[b]{0.49\linewidth}
        \includegraphics[width=\linewidth]{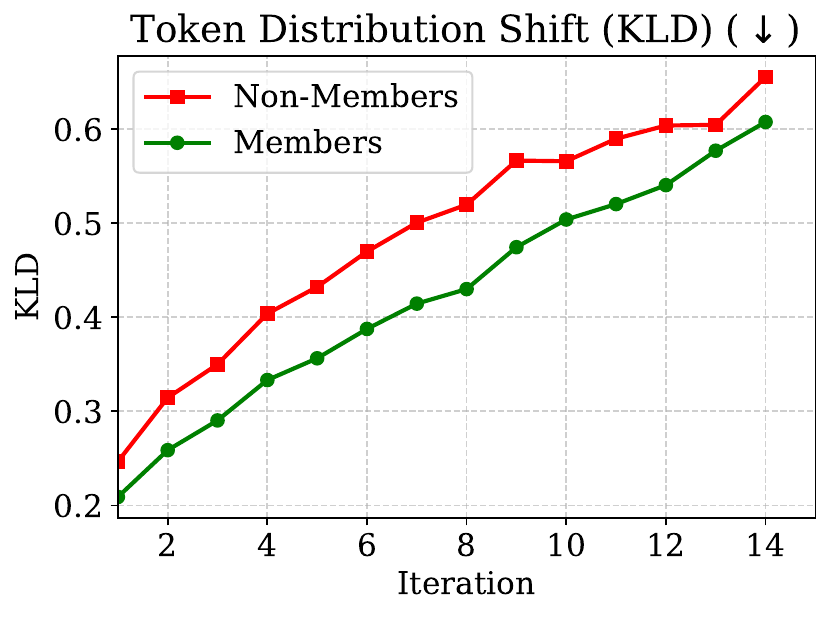}
        \caption{OLMo-7B}
    \end{subfigure}
    \hfill
    \begin{subfigure}[b]{0.49\linewidth}
        \includegraphics[width=\linewidth]{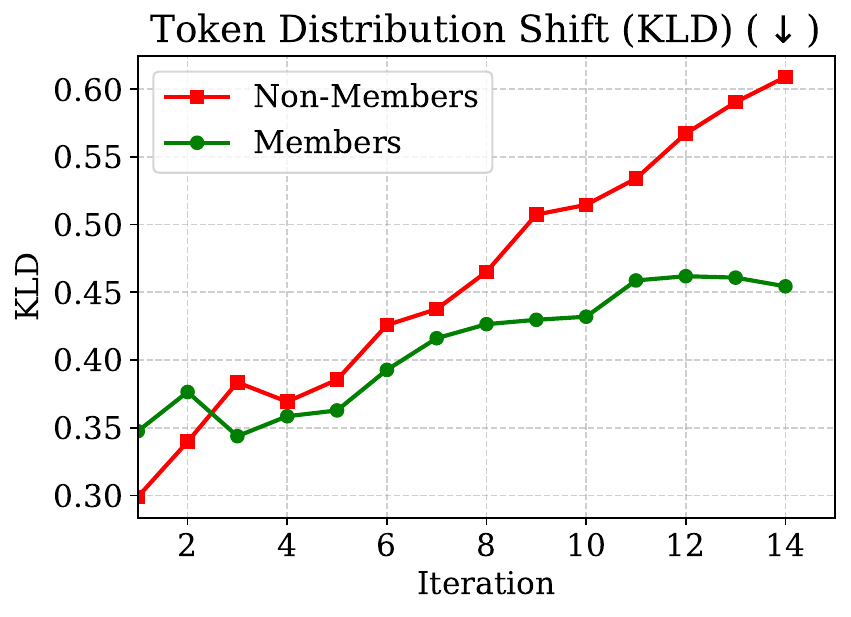}
        \caption{Pythia-6.9B}
    \end{subfigure}
    \vspace{0.3em}

    \begin{subfigure}[b]{0.49\linewidth}
        \includegraphics[width=\linewidth]{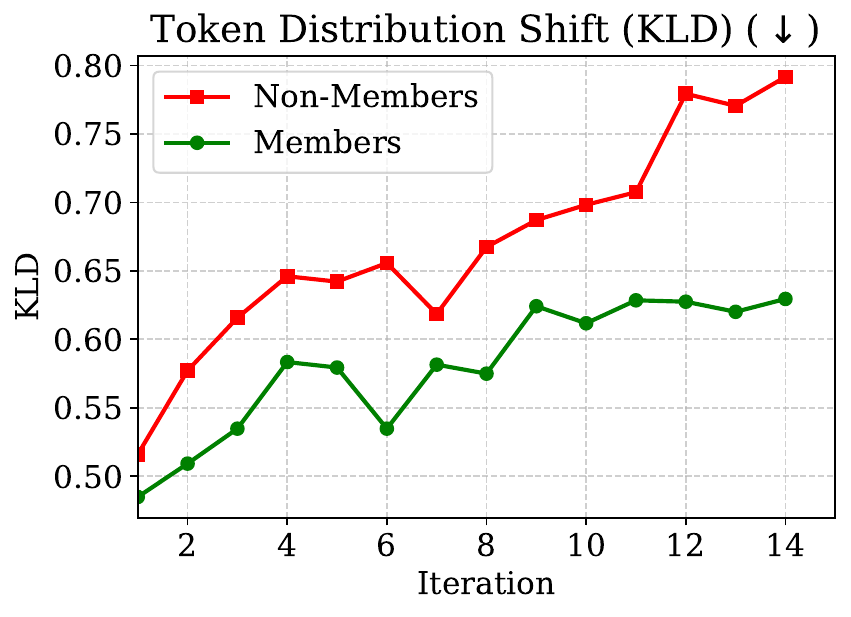}
        \caption{OPT-6.7B}
    \end{subfigure}
    \hfill
    \begin{subfigure}[b]{0.49\linewidth}
        \includegraphics[width=\linewidth]{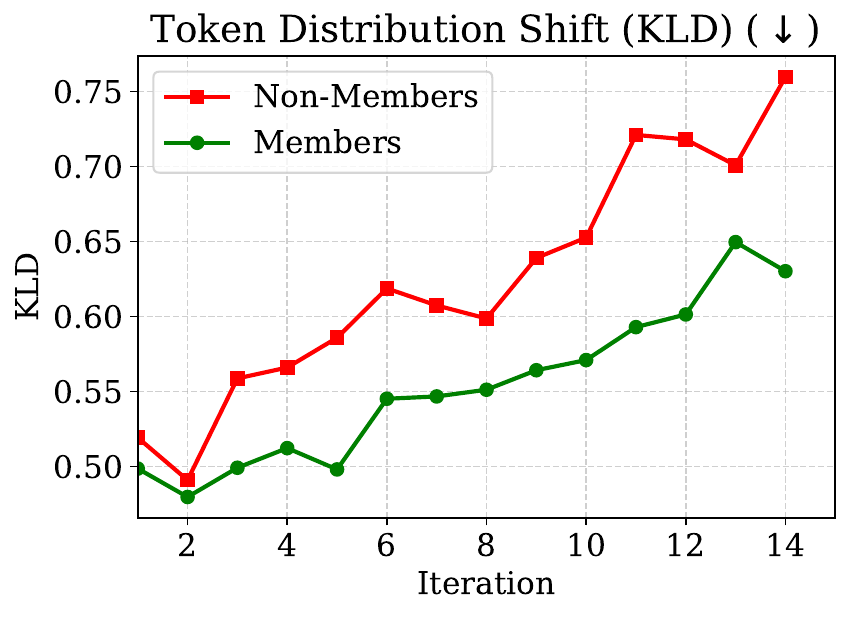}
        \caption{Llama-13B}
    \end{subfigure}
\end{minipage}
\hfill
\begin{minipage}[t]{0.24\textwidth}
    \centering
    \subtitleboxblack{\textbf{Audio}\hspace{0.3em}\emph{(black-box)}}
    \vspace{0.5em}
    
    \begin{subfigure}[b]{0.7\linewidth}
        \includegraphics[width=\linewidth]{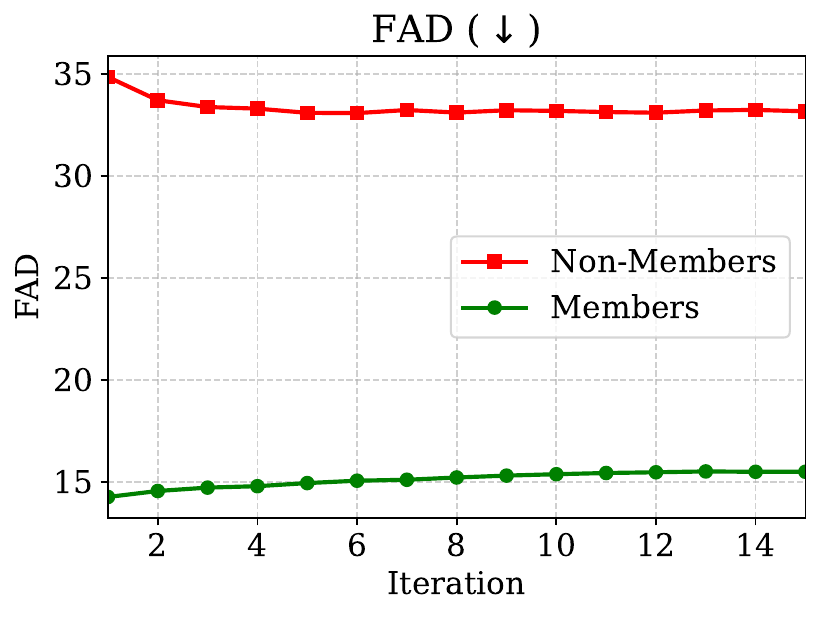}
        \caption{FreeVC}
    \end{subfigure}
    \vspace{1.4em}
    
    \begin{subfigure}[b]{0.7\linewidth}
        \includegraphics[width=\linewidth]{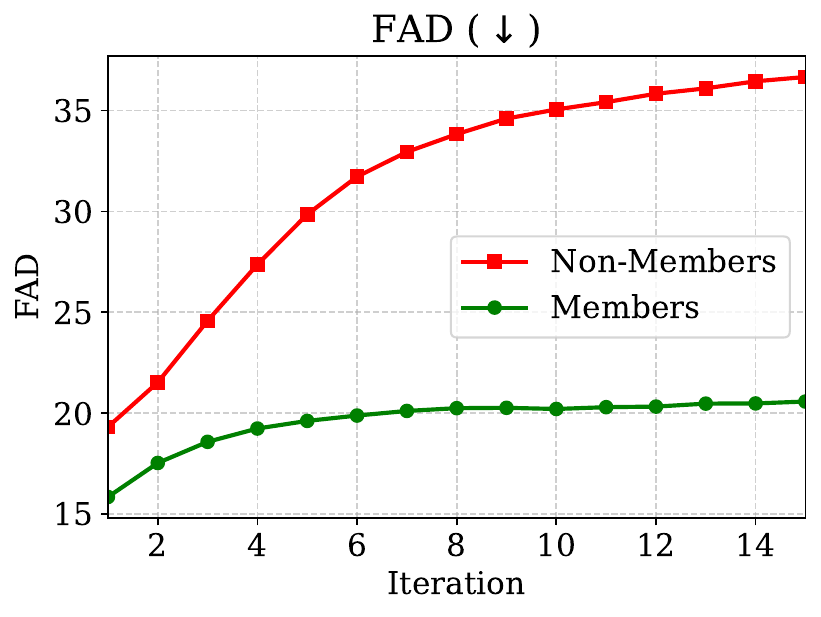}
        \caption{AutoVC}
    \end{subfigure}
\end{minipage}

\caption{\textbf{Divergence trajectories across chained regeneration steps.} Rows represent image models (FID), audio models (FAD), and language models (KLD). Across modalities and access settings, \colorbox{okFill}{member examples} retain lower divergence and degrade more slowly than \colorbox{badFill}{non-member examples}, providing a robust signal for both membership and dataset inference. Evaluations were conducted using the following sample sizes: 10,000 for IAR and Diffusion models, 2,000 for Audio, 1,000 for OLMo, 512 for Pythia, and 250 for Llama and OPT.}
\label{fig:fid-proof}
\vspace{-0.4cm}
\end{figure}

\section{Experiments}
\label{sec:experiments}

\subsection{Experimental Setup}
To ensure a scientifically sound evaluation across our MIA tasks, we restrict our setup to models trained on public datasets with well-defined training and test splits. We evaluate our method across three diverse modalities to demonstrate its broad applicability. For image generation, we analyze SOTA autoregressive models (VAR-d\{20, 24, 30\}~\citep{tian2024var}, RAR-\{L, XL, XXL\}~\citep{yu2024rar}) and diffusion models (DiT-RF-\{XL, G\}~\citep{fei2024dit}, UViT-T2I-Deep~\citep{bao2022uvit}), trained primarily on the ImageNet~\citep{deng2009imagenet} or COCO~\citep{veit2016coco} datasets for class-conditioned and text-to-image generation. We extend this evaluation to the audio domain using modern Voice Conversion models (AutoVC~\citep{qian2019autovc}, FreeVC~\citep{li2023freevc}), and to the language domain utilizing prominent LLMs (LLaMA-13B~\citep{touvron2023llama}, Pythia-6.9B~\citep{biderman2023pythia}, OLMo-7B~\citep{groeneveld2024olmo}, and OPT-6.7B~\citep{zhang2022opt}). Comprehensive details regarding all specific models and datasets used in experiments are provided in the Appendix~\ref{app:model_details} and~\ref{app:dataset_details}.
All experiments were conducted on a machine equipped with 3 NVIDIA RTX PRO 5000 Blackwell GPUs (48 GB VRAM each) and an Intel Xeon Gold 6526Y CPU.

\subsection{Metrics}
\label{sec:metrics}
To measure similarity between feature representations and their fidelity, we utilize the Fréchet Inception Distance (FID)~\citep{heusel2017fid}, and Fréchet Audio Distance (FAD)~\citep{kilgour2018fad} for vision and audio models, respectively. For LLMs, we measure Token Diversity as the Kullback–Leibler Divergence (KLD) between the normalized average token probability distribution at the current iteration and that of the first evaluation iteration:
Token Diversity at iteration \( t \) (for \( t > 1 \)) is defined as the
Kullback-Leibler divergence from iteration \( 1 \):
\[
\mathrm{TokenDiversity}(t)
=
D_{\mathrm{KL}}\!\left( p_t \,\|\, p_1 \right)
=
\sum_{i \in V} p_t(i)\,\log \frac{p_t(i)}{p_1(i)}.
\]
where $p_t$ and $p_1$ are the normalized average token probability distributions for step $t$ and step $1$ respectively.

\subsection{MIA and DI procedures}
\label{sec:madremia-mia-di}

\paragraph{MIA pipeline.}
For each labeled member/non-member sample, we generate $Z_0,\dots,Z_T$, compute $\phi_t$, $S_T$, and modality-specific trajectory features, form $\tilde z$, and fit $h_{\mathrm{mia}}$. We evaluate univariate trajectory statistics by direct thresholding and multivariate features by logistic-regression fusion on strictly stratified 80/20 train-test splits. We report AUC, TPR at 1\% FPR, and accuracy.
Splitting is performed at sample/source level before trajectory generation: all descendants of the same $Z_0$ (all $Z_t$, all derived features) remain in the same partition. Thresholds, feature normalization, and LR calibration are fit on train only and applied unchanged to test.
Primary endpoint is the multivariate fusion score; univariate $S_T^\star$ results are reported as theory-aligned diagnostics.
If $T$ is tuned, it is selected on train (or a train-only validation split) and never on test. We use established metrics: TPR@FPR=1\%, AUC, and Accuracy.

\paragraph{DI pipeline.}
The DI pipeline is identical, replacing the target label with dataset-origin variable $D$. The same $Z_t$, $\phi_t$, and trajectory-fusion machinery is used; only label semantics and calibration change. For DI, splitting/evaluation are performed at dataset or source-group level, and per-sample logits are aggregated by a fixed mean rule into a dataset-level score. Dataset-level decisions are evaluated against a permutation-based null over dataset labels within the evaluation fold.

Both MIA and DI setups inherit standard generative privacy-audit conventions, including the IAR setting introduced in~\citep{kowalczuk2025privacy}.

\subsection{Research questions}
We evaluate whether chained regeneration can be a signal amplifier for one-shot auditing across modalities, model families, and access regimes. Our analysis focuses on the following questions:  
\textbf{(Q1)} What distinguishes member/non-member chained generation trajectories?
\textbf{(Q2)} Can one-shot membership signal be amplified for single $\phi(t)$ features? What are the gains for trajectory-based $S_T$ over $\phi(t)$ across modalities?
\textbf{(Q3)} Does MADreMIA increase member/non-member separability compared to one-shot MIA?
\textbf{(Q4)} Does increasing generative model stochasticity during regeneration loop affect the trajectories separation between members and non-members?
\textbf{(Q5)} How does model size affect member/nonmember trajectory signals?
 Finally, we also provide a short analysis of the Getty Images case~\cite{coulter2024gettyvsstabilityai} in Appendix~\ref{app:getty}.

\subsection{Members and Nonmembers differ in generative trajectories: qaulitative results.}
Across all modalities, members and non-members exhibit distinct regeneration dynamics. Members preserve structure longer and drift more slowly, while non-members degrade faster and diverge toward the model’s generic prior. This pattern is visible both in per-step qualitative examples (\Cref{fig:mem_trajectory,fig:nonmem_trajectory}) and in aggregate divergence trajectories (\Cref{fig:fid-proof}) comparing the quality of regenerations to base samples (FID for images, FAD for audio) and the drift of output token distribution in text model. The results presented support the core hypothesis that auto-regeneration trajectory contains multiple membership cues. The key trajectory asymmetry findings are:

\begin{enumerate}
    \item \textbf{Fidelity and degradation:} \colorbox{okFill}{Re-members} maintain high structural quality throughout the trajectory, whereas \colorbox{badFill}{re-non-members} exhibit rapid perceptual and semantic degradation.
    \item \textbf{Persistence and divergence:} \colorbox{okFill}{Re-members} demonstrate significant structural persistence and coherence across iterations. Conversely, \colorbox{badFill}{re-non-members} diverge more quickly, drifting toward the model’s general distribution and losing the specific characteristics of the original input.
    \vspace{-0.5cm}
\end{enumerate}

\begin{figure}[t]
  \centering

  \begin{subfigure}[t]{0.48\textwidth}
    \centering
    \includegraphics[width=\textwidth]{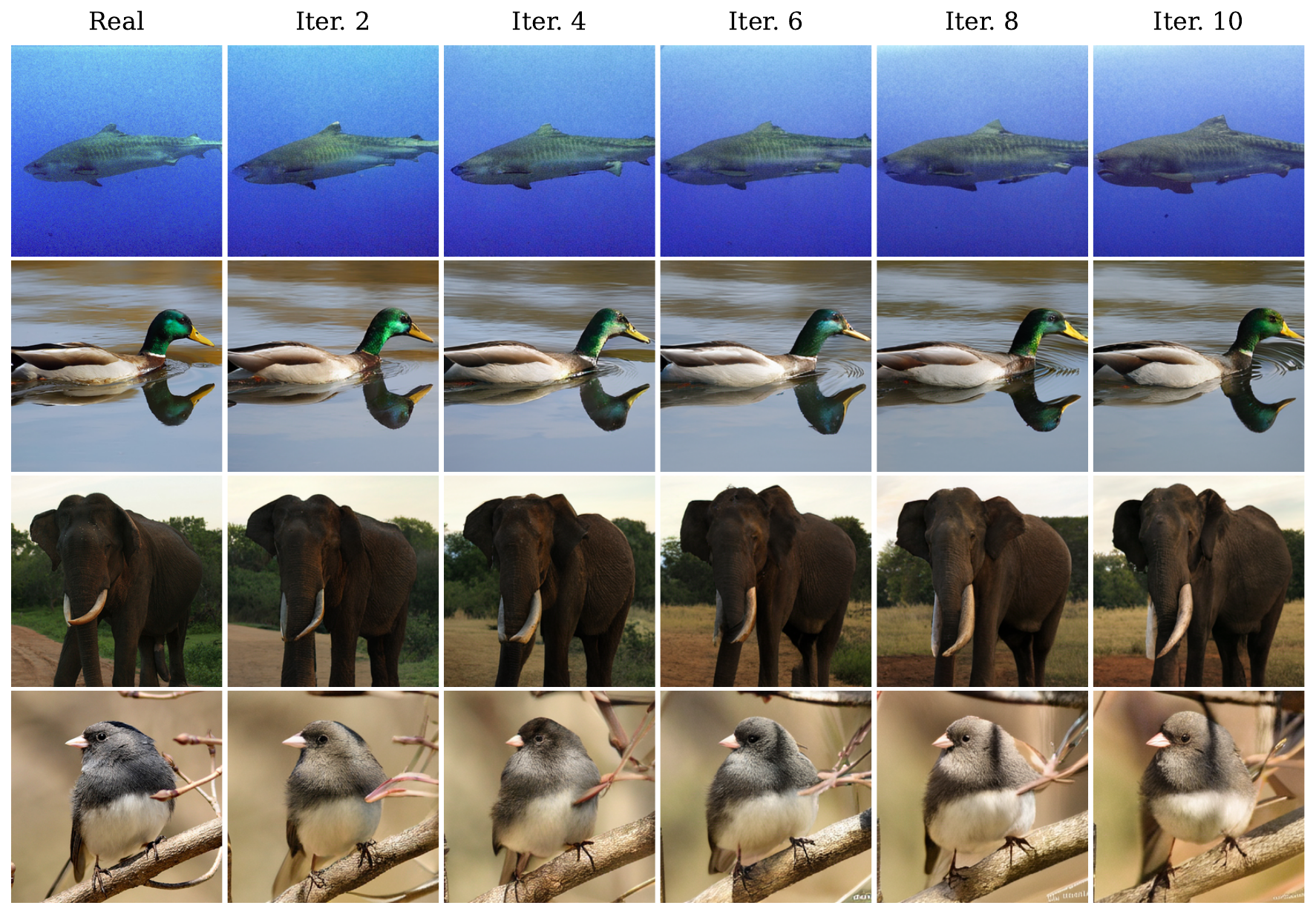}
    \caption{Members.}
    \label{fig:mem_trajectory}
  \end{subfigure}
  \hfill
  \begin{subfigure}[t]{0.48\textwidth}
    \centering
    \includegraphics[width=\textwidth]{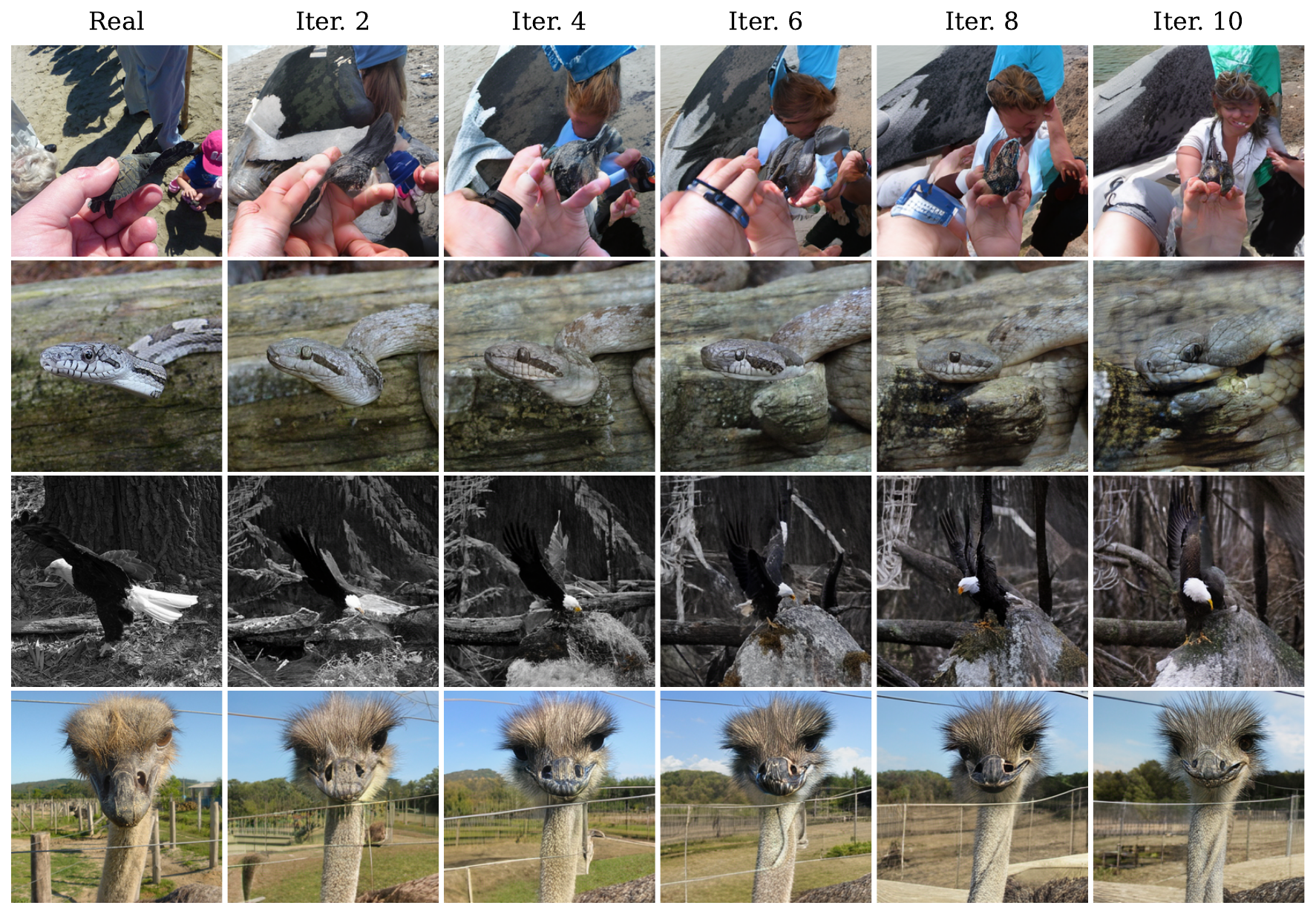}
    \caption{Non-members.}
    \label{fig:nonmem_trajectory}
  \end{subfigure}

  \caption{\textbf{Qualitative comparison of members and non-members across iterative regeneration (VAR-d30).} Non-member images quality degrades faster than members, whose semantic coherence is largely preserved across regenerations.}
  \label{fig:mem_vs_nonmem_trajectory}
  \vspace{-0.5cm}
\end{figure}

\paragraph{The asymmetry is present across diverse models and modalities.}
We test broad architectural diversity: image autoregressive and diffusion models, audio voice conversion/generation models, and text generative models. \Cref{fig:fid-proof} summarizes trajectory behavior using modality-appropriate divergence metrics~\ref{sec:metrics}. This design directly tests whether our proposed signal amplification is model- and modality-agnostic.


\subsection{$\phi(t)$ statistics may increase membership signal over one-shot $\phi(0)$.}
\label{sec:theory-validation-summary}

We evaluate the validity of our theoretical assumptions using empirical generative trajectories, fixing $T$ to the first 15 iterations. As summarized in Table~\ref{tab:assumption-support-restricted}, while Assumption A2 is fully supported, A1 and A3 receive only partial empirical backing. Specifically, for certain values of $\phi_t$, the absence of clear exponential decay within the first 15 iterations is acceptable for our main claim, since it indicates slower or plateau-like leakage. It suggests that non-exponential leakage forms may also govern real trajectories.

To assess the efficacy of modality-specific trajectory statistics, we evaluate whether aggregated trajectory evidence remains competitive with - or outperforms - the  one-shot evidence. We define
$
\mathrm{gain}:=\frac{\max_T \mathrm{SNR}^2(S_T)}{\max_t \mathrm{SNR}^2(\phi_t)},
$
and show results in Table~\ref{tab:gains}. Trajectory diagnostics are strong:
$
P(\mathrm{gain}\ge 1)=\frac{8}{11}=0.73,\qquad
P(\mathrm{gain}\ge 0.9)=\frac{10}{11}=0.91,
$
with median gain $=1.00$. Given the small number of tested features, we interpret these numbers as supportive preliminary evidence. 

\begin{table}[t]
\centering
\small
\begin{minipage}[t]{0.48\linewidth}
    \centering
    \caption{\textbf{$S_T$ gains over $\phi_t$ across modalities.} $P(\text{gain} \ge 1)$ indicates the fraction of models where scoring matches or exceeds the baseline.}
    \vspace{0.5em}
    \begin{tabular}{lcccc}
    \toprule
    Family & $n$ & $P(\ge 1)$ & $P(\ge 0.9)$ & Median \\
    \midrule
    VAR       & 3 & 0.67 & 1.00 & 1.00 \\
    Diffusion & 3 & 0.67 & 1.00 & 1.00 \\
    LLM       & 5 & 0.80 & 0.80 & 1.04 \\
    \bottomrule
    \end{tabular}
    \label{tab:gains}
\end{minipage}
\hfill
\begin{minipage}[t]{0.48\linewidth}
    \centering
    \caption{\textbf{Assumption support across model families.} Fractions indicate the number of models satisfying each assumption.}
    \vspace{0.5em}
    \begin{tabular}{lccc}
    \toprule
    Family & A1 & A2 & A3 \\
    \midrule
    VAR       & $3/3$ & $3/3$ & $3/3$ \\
    Diffusion & $3/3$ & $3/3$ & $2/3$ \\
    LLM       & $3/5$ & $5/5$ & $2/5$ \\
    \bottomrule
    \end{tabular}
    \label{tab:assumption-support-restricted}
\end{minipage}
\end{table}

\subsection{MADreMIA amplifies baseline MIA}
\Cref{tab:mia-llm,tab:mia-iar} compare MADreMIA-augmented attacks against their unaided baselines across LLMs and IARs. Across all base attacks and model families, incorporating reconstruction \textit{Diversity} ($MSE_{\text{sum}}$, $LPIPS_{\text{sum}}$), \textit{Quality} ($SSIM_{\text{sum}}$, $SSIM_{\text{std}}$), or both (\textit{Combined}) consistently raises attack performance. Gains are most pronounced on OLMo-7B, where, for example, the Zlib baseline collapses to AUC 0.179 yet recovers to 0.868 with Combined signals, and CAMIA reaches AUC 0.969 — the strongest result across all settings. On the remaining LLMs the improvements are more modest but consistent. For IARs, MADreMIA yields clear gains in classification accuracy: VAR-d30 improves from 0.607 to 0.696 (+8.9 p.p.) and RAR-XXL from 0.562 to 0.713 (+15.1 p.p.), although TPR@1\%FPR gains are smaller and less stable. Together, these results confirm that iterative reconstruction signals provide complementary, architecture-agnostic information that reliably strengthens membership inference across both LLMs and IARs.

\vspace{-0.3cm}
\begin{table}[t]
\centering
\scriptsize
\caption{MIA results on established LLM benchmarks (described in detail in Appendix~\ref{app:dataset_details}), where MADreMIA trajectory features are aggregated across 15 iterations. Augmenting any base attack with diversity, quality, or combined signals consistently improves all the metrics over the unaided baselines.}
\label{tab:mia-llm}
\resizebox{\textwidth}{!}{
\begin{tabular}{l|cc|cc|cc|cc}
\toprule
 & \multicolumn{2}{c}{Pythia-6.9B} & \multicolumn{2}{c}{OLMo-7B} & \multicolumn{2}{c}{OPT-6.7B} & \multicolumn{2}{c}{Llama-13B} \\
\cmidrule(lr){2-3} \cmidrule(lr){4-5} \cmidrule(lr){6-7} \cmidrule(lr){8-9}
Attack & TPR@1\%FPR & AUC & TPR@1\%FPR & AUC & TPR@1\%FPR & AUC & TPR@1\%FPR & AUC \\
\midrule
Loss~\cite{yeom2018privacy} & 0.004 {\tiny $\pm$0.00} & 0.349 {\tiny $\pm$0.02} & 0.008 {\tiny $\pm$0.01} & 0.523 {\tiny $\pm$0.02} & 0.013 {\tiny $\pm$0.01} & 0.390 {\tiny $\pm$0.04} & 0.009 {\tiny $\pm$0.01} & 0.368 {\tiny $\pm$0.04} \\
~+ Diversity & 0.093 {\tiny $\pm$0.06} & 0.647 {\tiny $\pm$0.05} & \textbf{0.303} {\tiny $\pm$0.09} & 0.735 {\tiny $\pm$0.04} & 0.092 {\tiny $\pm$0.12} & 0.613 {\tiny $\pm$0.09} & 0.173 {\tiny $\pm$0.14} & 0.690 {\tiny $\pm$0.08} \\
~+ Quality & 0.096 {\tiny $\pm$0.07} & \textbf{0.686} {\tiny $\pm$0.05} & 0.032 {\tiny $\pm$0.04} & 0.702 {\tiny $\pm$0.04} & 0.084 {\tiny $\pm$0.09} & 0.652 {\tiny $\pm$0.07} & \textbf{0.198} {\tiny $\pm$0.13} & 0.679 {\tiny $\pm$0.09} \\
~+ Combined & \textbf{0.100} {\tiny $\pm$0.08} & 0.673 {\tiny $\pm$0.06} & 0.263 {\tiny $\pm$0.14} & \textbf{0.804} {\tiny $\pm$0.03} & \textbf{0.112} {\tiny $\pm$0.12} & \textbf{0.672} {\tiny $\pm$0.09} & 0.188 {\tiny $\pm$0.15} & \textbf{0.702} {\tiny $\pm$0.07} \\
\midrule
Zlib~\cite{carlini2021zlib}) & 0.000 {\tiny $\pm$0.00} & 0.338 {\tiny $\pm$0.02} & 0.022 {\tiny $\pm$0.01} & 0.179 {\tiny $\pm$0.01} & 0.012 {\tiny $\pm$0.02} & 0.369 {\tiny $\pm$0.03} & 0.009 {\tiny $\pm$0.01} & 0.337 {\tiny $\pm$0.03} \\
~+ Diversity & \textbf{0.129} {\tiny $\pm$0.08} & 0.677 {\tiny $\pm$0.05} & \textbf{0.318} {\tiny $\pm$0.11} & 0.842 {\tiny $\pm$0.03} & 0.099 {\tiny $\pm$0.11} & 0.628 {\tiny $\pm$0.08} & 0.176 {\tiny $\pm$0.14} & 0.689 {\tiny $\pm$0.07} \\
~+ Quality & 0.124 {\tiny $\pm$0.08} & 0.673 {\tiny $\pm$0.06} & 0.208 {\tiny $\pm$0.10} & 0.833 {\tiny $\pm$0.03} & 0.092 {\tiny $\pm$0.10} & 0.667 {\tiny $\pm$0.08} & \textbf{0.210} {\tiny $\pm$0.14} & 0.688 {\tiny $\pm$0.08} \\
~+ Combined & 0.128 {\tiny $\pm$0.08} & \textbf{0.690} {\tiny $\pm$0.06} & 0.295 {\tiny $\pm$0.14} & \textbf{0.868} {\tiny $\pm$0.02} & \textbf{0.121} {\tiny $\pm$0.12} & \textbf{0.672} {\tiny $\pm$0.08} & 0.194 {\tiny $\pm$0.15} & \textbf{0.693} {\tiny $\pm$0.08} \\
\midrule
Min-K\%~\cite{shi2023mink} & \textbf{0.124} {\tiny $\pm$0.08} & 0.680 {\tiny $\pm$0.05} & 0.067 {\tiny $\pm$0.07} & 0.703 {\tiny $\pm$0.04} & 0.086 {\tiny $\pm$0.11} & 0.650 {\tiny $\pm$0.08} & 0.127 {\tiny $\pm$0.11} & 0.648 {\tiny $\pm$0.09} \\
~+ Diversity & 0.120 {\tiny $\pm$0.07} & 0.677 {\tiny $\pm$0.05} & 0.219 {\tiny $\pm$0.08} & 0.775 {\tiny $\pm$0.03} & 0.064 {\tiny $\pm$0.09} & 0.640 {\tiny $\pm$0.08} & 0.144 {\tiny $\pm$0.13} & 0.685 {\tiny $\pm$0.08} \\
~+ Quality & \textbf{0.124} {\tiny $\pm$0.07} & \textbf{0.695} {\tiny $\pm$0.05} & 0.095 {\tiny $\pm$0.09} & 0.772 {\tiny $\pm$0.03} & \textbf{0.094} {\tiny $\pm$0.11} & 0.674 {\tiny $\pm$0.09} & 0.178 {\tiny $\pm$0.14} & 0.686 {\tiny $\pm$0.08} \\
~+ Combined & 0.113 {\tiny $\pm$0.07} & 0.694 {\tiny $\pm$0.05} & \textbf{0.240} {\tiny $\pm$0.15} & \textbf{0.837} {\tiny $\pm$0.03} & 0.092 {\tiny $\pm$0.10} & \textbf{0.694} {\tiny $\pm$0.08} & \textbf{0.182} {\tiny $\pm$0.14} & \textbf{0.700} {\tiny $\pm$0.07} \\
\midrule
CAMIA~\cite{chang2025camia} & 0.111 {\tiny $\pm$0.09} & 0.683 {\tiny $\pm$0.05} & 0.428 {\tiny $\pm$0.25} & 0.958 {\tiny $\pm$0.01} & \textbf{0.128} {\tiny $\pm$0.12} & 0.664 {\tiny $\pm$0.08} & 0.166 {\tiny $\pm$0.13} & 0.686 {\tiny $\pm$0.09} \\
~+ Diversity & 0.118 {\tiny $\pm$0.08} & 0.690 {\tiny $\pm$0.05} & 0.517 {\tiny $\pm$0.25} & 0.966 {\tiny $\pm$0.01} & 0.104 {\tiny $\pm$0.11} & 0.668 {\tiny $\pm$0.08} & 0.146 {\tiny $\pm$0.12} & 0.692 {\tiny $\pm$0.08} \\
~+ Quality & \textbf{0.131} {\tiny $\pm$0.08} & \textbf{0.708} {\tiny $\pm$0.05} & 0.501 {\tiny $\pm$0.26} & 0.964 {\tiny $\pm$0.01} & 0.115 {\tiny $\pm$0.13} & 0.682 {\tiny $\pm$0.08} & \textbf{0.192} {\tiny $\pm$0.14} & 0.712 {\tiny $\pm$0.08} \\
~+ Combined & 0.109 {\tiny $\pm$0.08} & 0.696 {\tiny $\pm$0.05} & \textbf{0.553} {\tiny $\pm$0.27} & \textbf{0.969} {\tiny $\pm$0.01} & 0.109 {\tiny $\pm$0.12} & \textbf{0.689} {\tiny $\pm$0.08} & 0.176 {\tiny $\pm$0.13} & \textbf{0.716} {\tiny $\pm$0.08} \\
\bottomrule
\end{tabular}
}
\end{table}

\begin{table}[h!]
\centering
\scriptsize
\caption{MIA results on IARs, where MADreMIA trajectory features are aggregated across 10 iterations (benchmark details in Appendix~\ref{app:dataset_details}). While AUC remains stable across augmentation variants, TPR@1\%FPR and Accuracy improve substantially.}
\label{tab:mia-iar}
\begin{tabular}{l|ccc|ccc}
\toprule
 & \multicolumn{3}{c|}{VAR-d30} & \multicolumn{3}{c}{RAR-XXL} \\
\cmidrule(lr){2-4} \cmidrule(lr){5-7}
Attack & TPR@1\%FPR & AUC & ACC & TPR@1\%FPR & AUC & ACC \\
\midrule
Baseline~\citep{kowalczuk2025privacy} & 0.040 {\tiny $\pm$0.02} & 0.750 {\tiny $\pm$0.02} & 0.607 {\tiny $\pm$0.07} & 0.044 {\tiny $\pm$0.02} & 0.754 {\tiny $\pm$0.01} & 0.562 {\tiny $\pm$0.02} \\
~+ Diversity & \textbf{0.090} {\tiny $\pm$0.09} & 0.755 {\tiny $\pm$0.03} & 0.691 {\tiny $\pm$0.03} & \textbf{0.084} {\tiny $\pm$0.06} & 0.771 {\tiny $\pm$0.03} & 0.700 {\tiny $\pm$0.03} \\
~+ Quality & 0.076 {\tiny $\pm$0.08} & \textbf{0.757} {\tiny $\pm$0.03} & \textbf{0.703} {\tiny $\pm$0.03} & 0.079 {\tiny $\pm$0.07} & 0.754 {\tiny $\pm$0.04} & 0.703 {\tiny $\pm$0.03} \\
~+ Combined & 0.088 {\tiny $\pm$0.06} & 0.750 {\tiny $\pm$0.04} & 0.696 {\tiny $\pm$0.03} & 0.069 {\tiny $\pm$0.05} & \textbf{0.775} {\tiny $\pm$0.03} & \textbf{0.713} {\tiny $\pm$0.03} \\
\bottomrule
\end{tabular}
\end{table}

\subsection{MADreMIA amplifies baseline DI}

The p-value histograms in \Cref{fig:di} demonstrate that MADreMIA trajectory features consistently strengthen the statistical evidence for dataset-level inference across all evaluated architectures. On Pythia-6.9B,  augmented variants reach the 95\% confidence threshold at around 100 samples versus roughly 150 for the baseline. Furthermore, augmented variants shift the distribution of $-\log_{10}(p)$ values noticeably rightward relative to the baseline, with this pattern holding across all three signal types. The effect is more pronounced on RAR-XXL, where the Combined variant produces a substantially larger rightward shift, indicating that individual trials yield stronger and more reliable evidence for membership inference.

\begin{figure}[t]
  \centering
  \begin{subfigure}[b]{0.49\columnwidth}
      \centering
      \includegraphics[width=\linewidth]{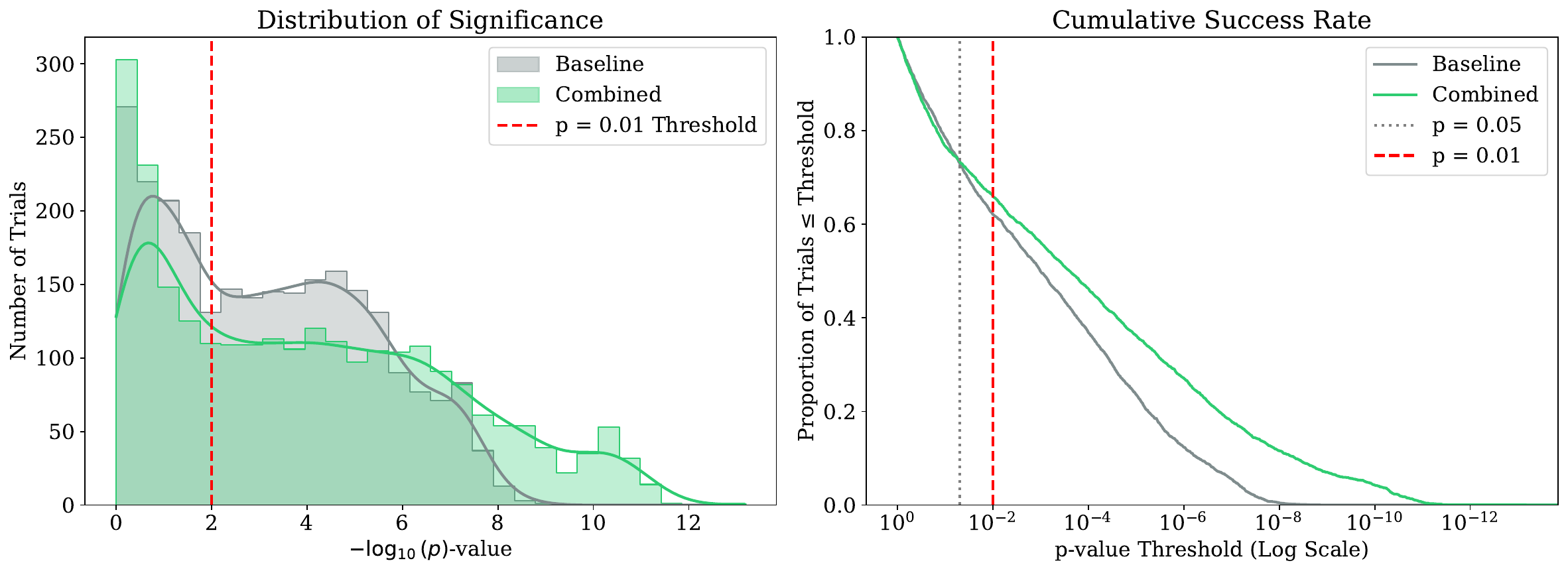}
      \caption{Pythia-6.9B}
  \end{subfigure}
  \hfill
  \begin{subfigure}[b]{0.49\columnwidth}
      \centering
      \includegraphics[width=\linewidth]{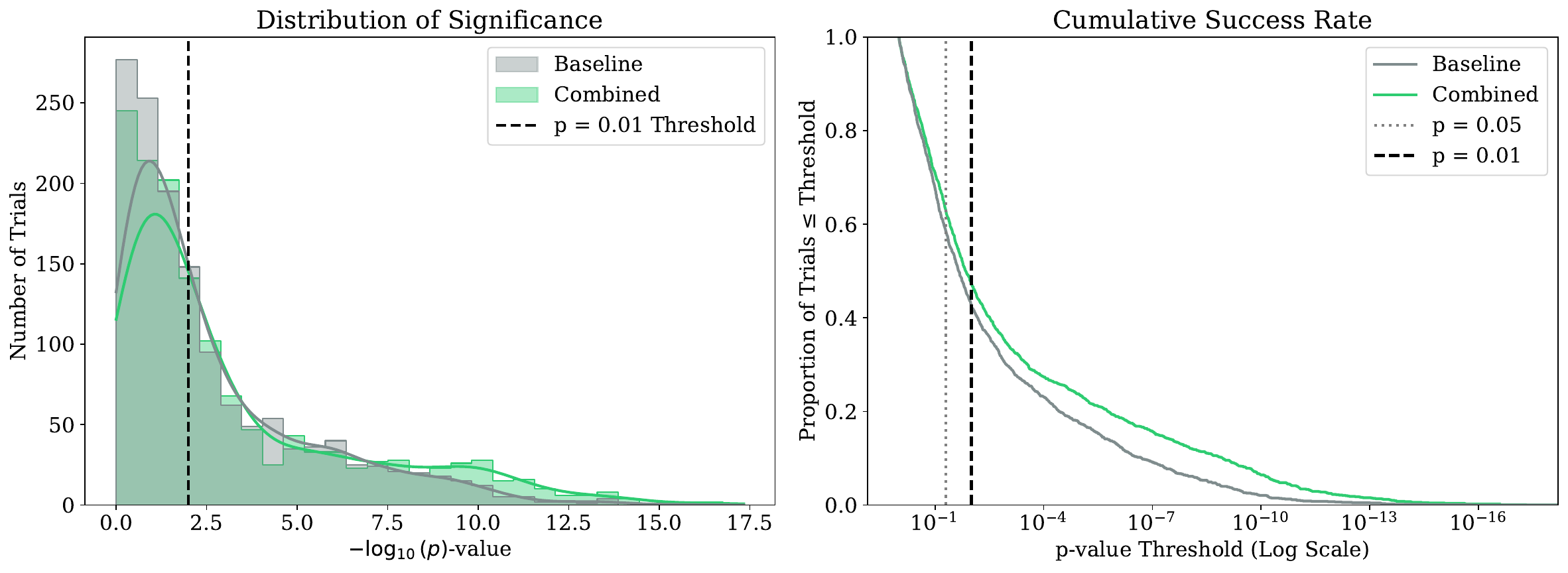}
      \caption{RAR-XXL}
  \end{subfigure}

  \caption{\textbf{Dataset Inference} performance on selected models.}
  \label{fig:di}
\end{figure}



\vspace{-0.3cm}

\subsection{Sensitivity analysis of generation strength}
Figure~\ref{fig:korbka} shows PR curves for VAR-d30 across regeneration strengths $s \in \{2, 4, 6, 8\}$, where $s$ controls how many final scales are regenerated. Members consistently achieve higher precision and recall than non-members across all values of $s$, confirming that the MIA signal is robust to the choice of regeneration strength. As $s$ increases, however, the two groups converge in PR space (see Appendix~\ref{app:rpc}).

\begin{figure}[t]
  \centering
  \begin{subfigure}[b]{0.22\columnwidth}
      \centering
      \includegraphics[width=\linewidth]{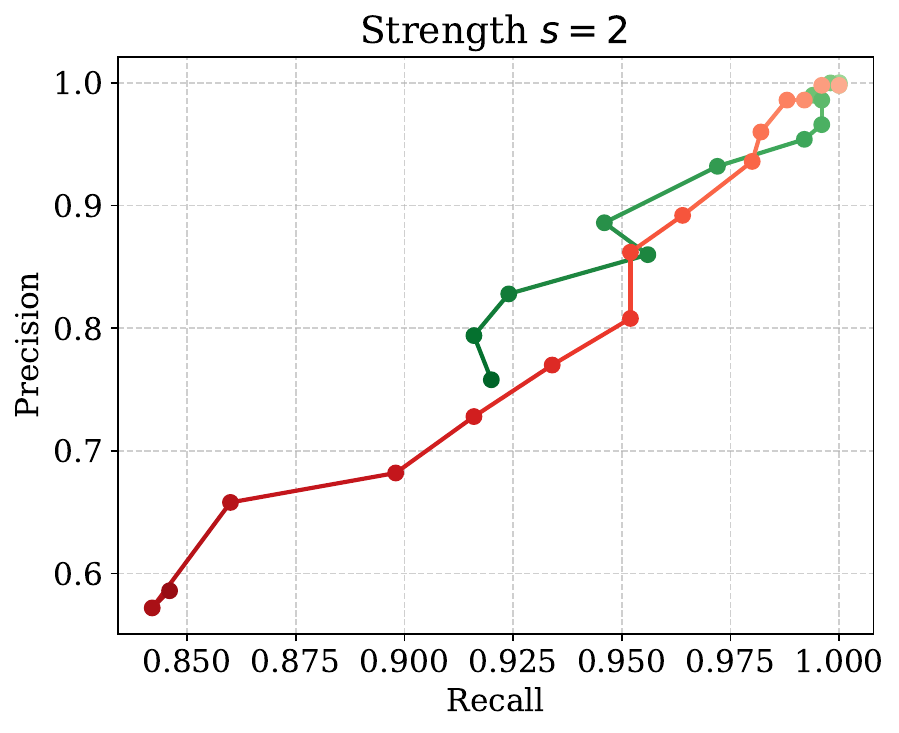}
  \end{subfigure}
  \hfill
  \begin{subfigure}[b]{0.22\columnwidth}
      \centering
      \includegraphics[width=\linewidth]{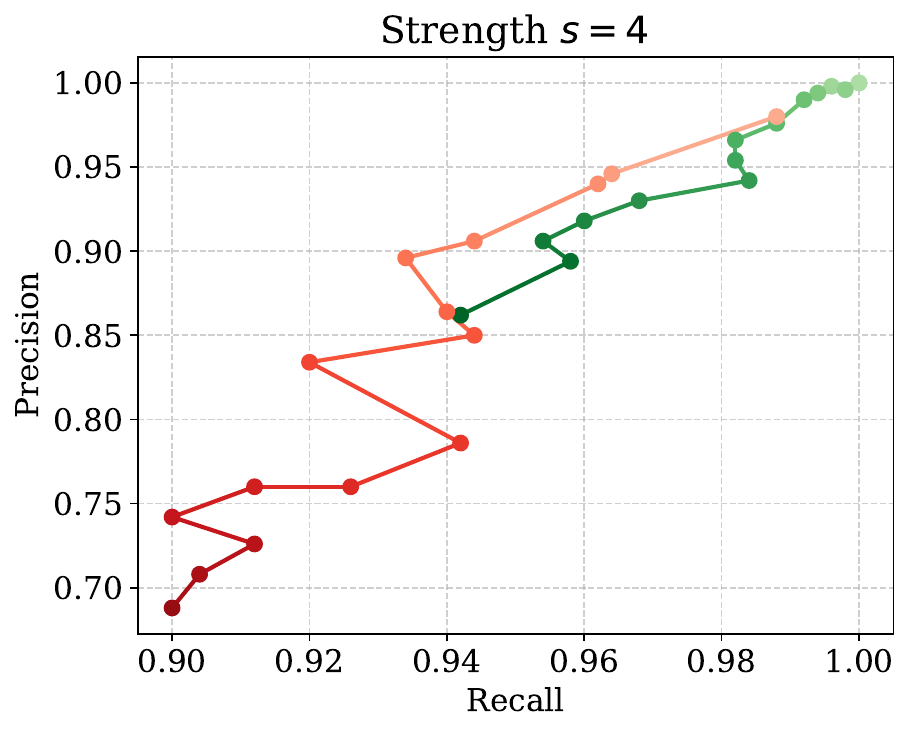}
  \end{subfigure}
  \hfill
  \begin{subfigure}[b]{0.22\columnwidth}
      \centering
      \includegraphics[width=\linewidth]{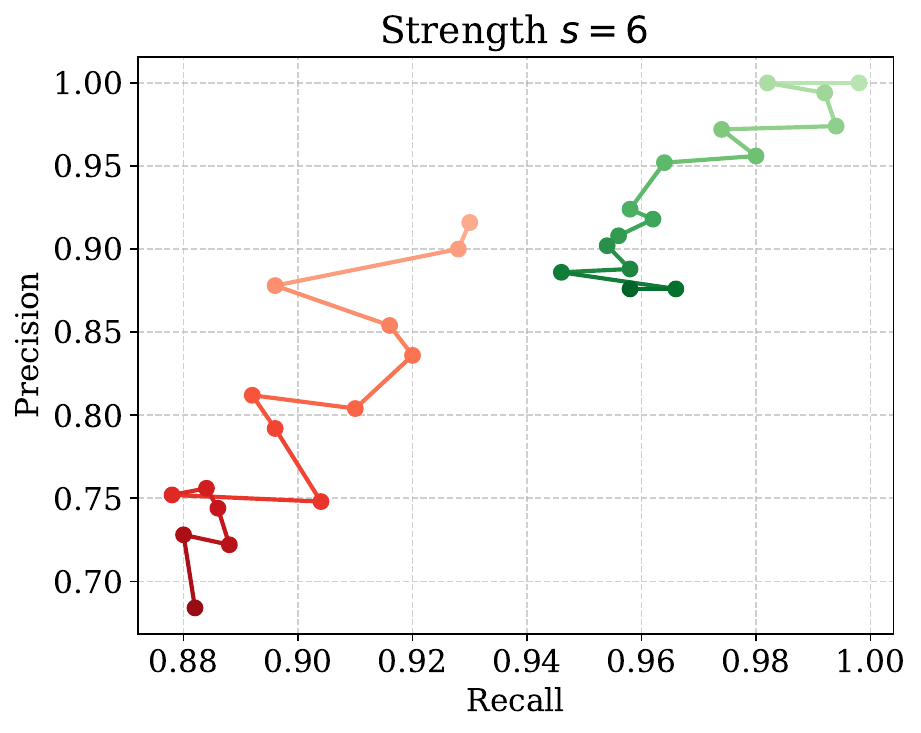}
  \end{subfigure}
  \hfill
  \begin{subfigure}[b]{0.28\columnwidth}
      \centering
      \includegraphics[width=\linewidth]{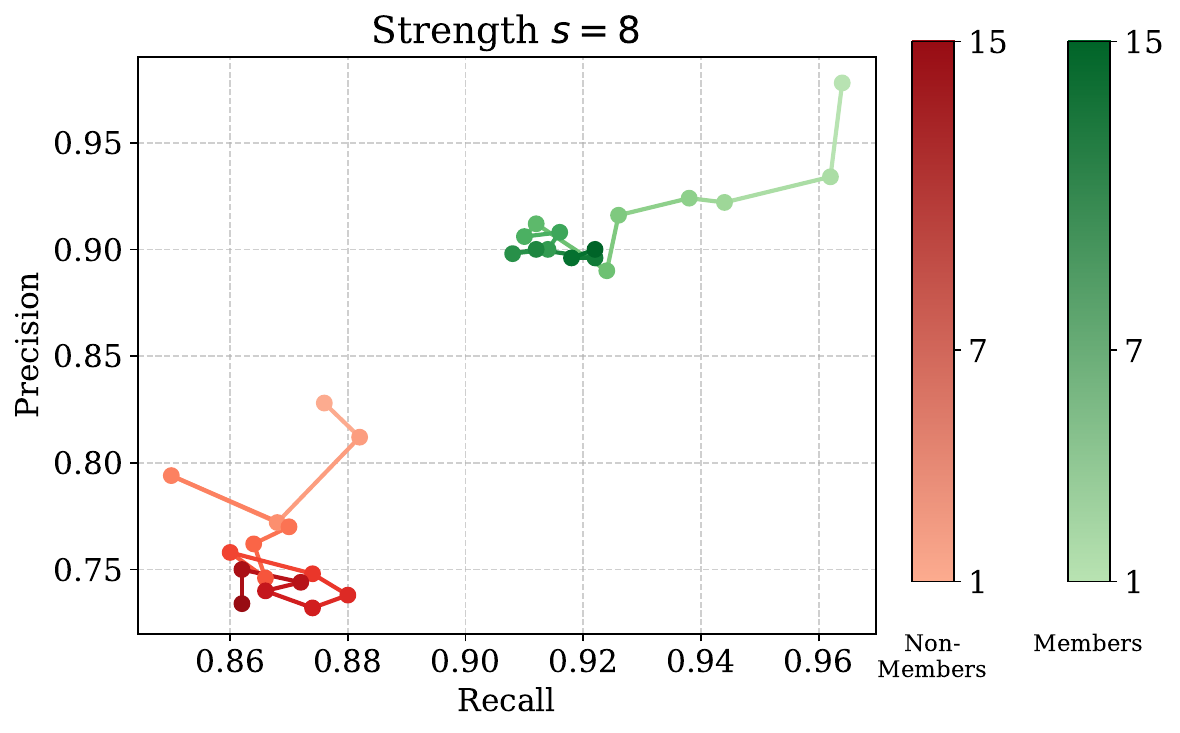}
  \end{subfigure}
  \caption{\textbf{Precision-Recall curves for VAR-d30} across regeneration strengths $s \in \{2, 4, 6, 8\}$. Members (green) and Non-Members (red) are traced over 15 iterations, with color intensity indicating iteration progress. Larger $s$ corresponds to more aggressive regeneration.}
  \label{fig:korbka}
\end{figure}

\subsection{Trajectory asymmetry scaling across model families}
As illustrated in \Cref{fig:size-ablation}, the membership signal -- quantified by $\Delta\text{FID} = \text{FID}_{\text{nonmem}} - \text{FID}_{\text{mem}}$ persists across all model scales, suggesting that the observed asymmetry is a fundamental property rather than an artifact of specific parameter regimes. While the magnitude of this separation varies across architectures, its relationship with model scale is not uniform. The separation grows stronger with model size in VAR and DiT-MoE, but remains largely unaffected by scaling in RARs. Ultimately, the underlying trend is robust: iterative trajectory chaining consistently exposes a larger membership gap compared to standard one-shot generations.
\begin{figure}[h!]
  \centering
  \begin{subfigure}[b]{0.32\columnwidth}
      \centering
      \includegraphics[width=\linewidth]{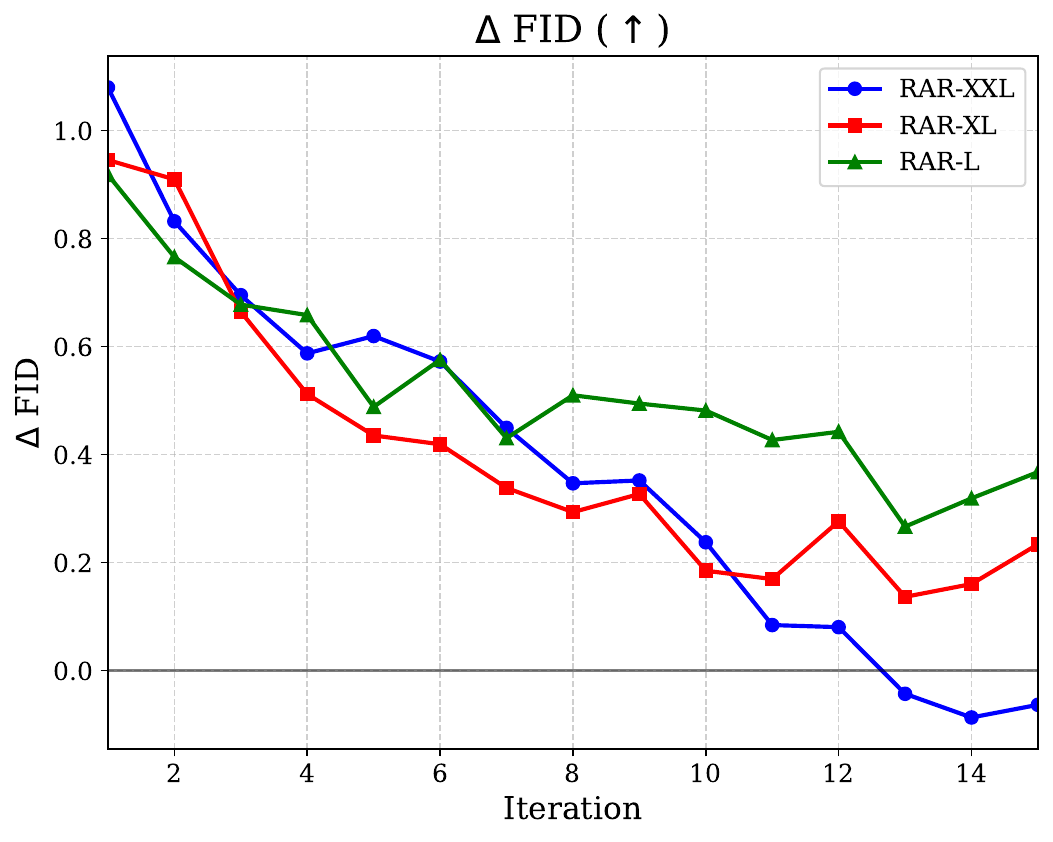}
      \caption{RAR}
  \end{subfigure}
  \hfill
  \begin{subfigure}[b]{0.32\columnwidth}
      \centering
      \includegraphics[width=\linewidth]{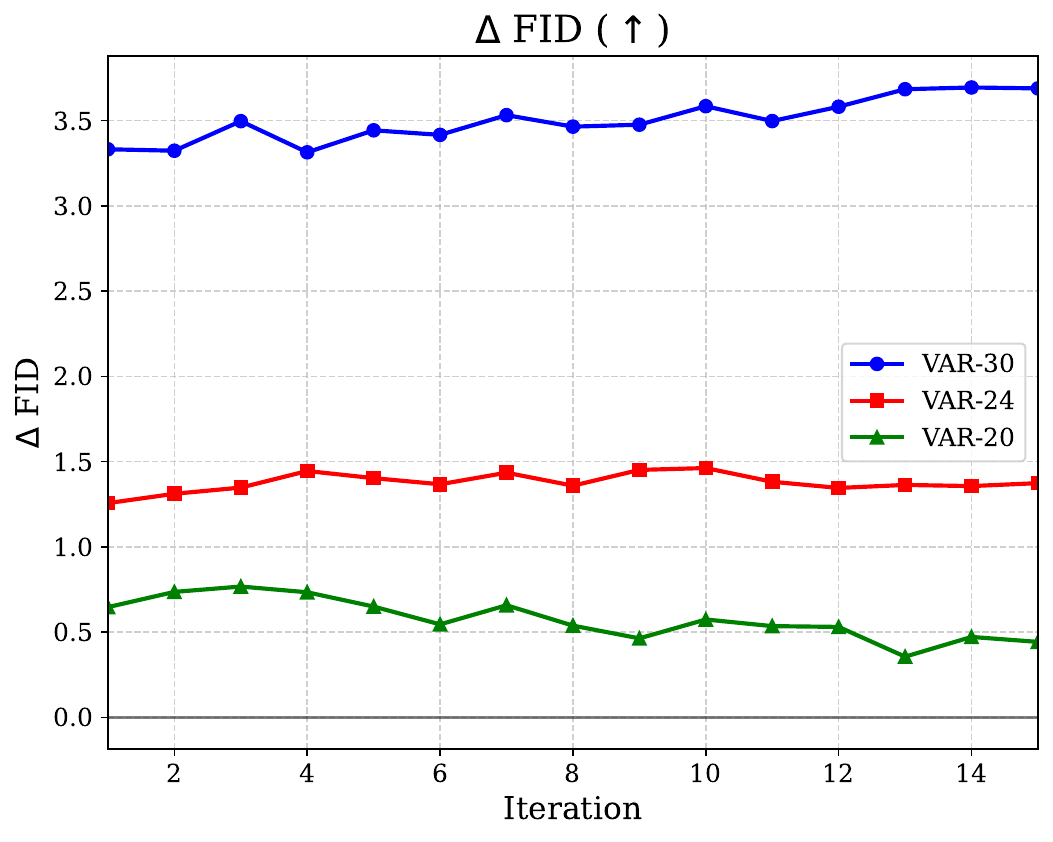}
      \caption{VAR}
  \end{subfigure}
  \hfill
  \begin{subfigure}[b]{0.32\columnwidth}
      \centering
      \includegraphics[width=\linewidth]{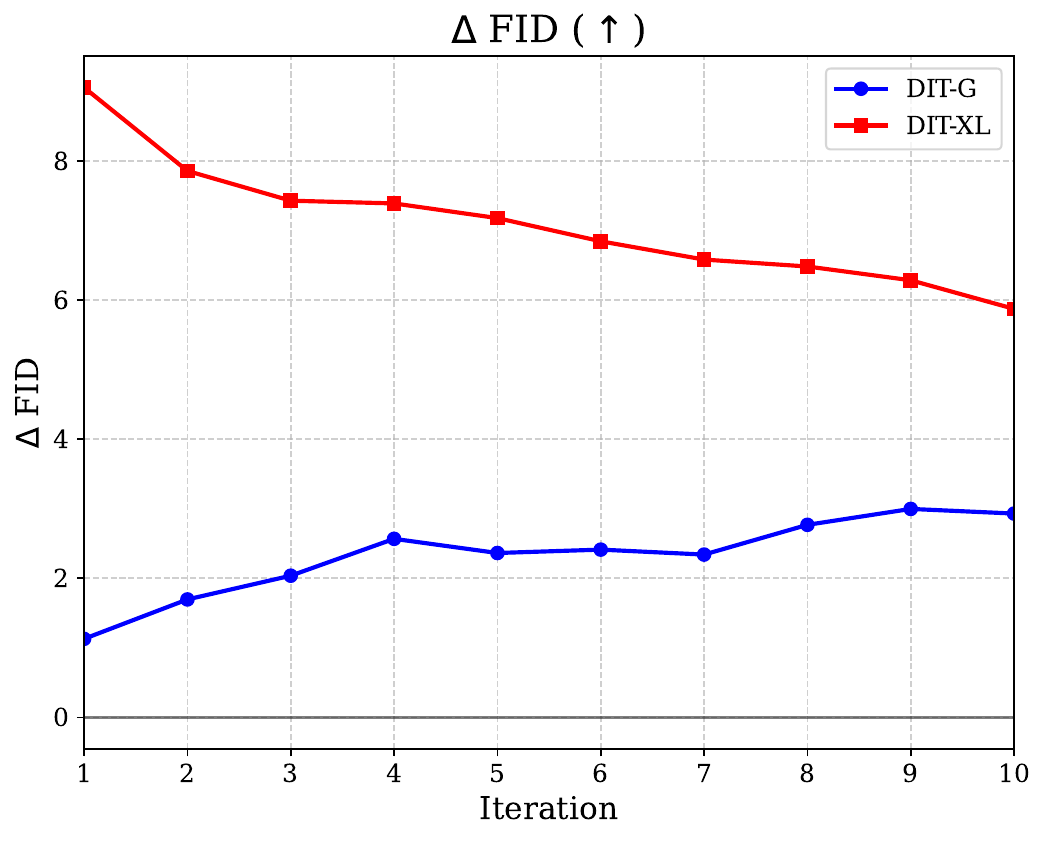}
      \caption{DiT-MoE}
  \end{subfigure}
  \caption{\textbf{Ablation: Trajectory asymmetry scaling across model families.} Membership separation ($\Delta $ FID) persists across model scales, confirming that iterative trajectory chaining consistently amplifies membership signals compared to one-shot baselines.}
  \label{fig:size-ablation}
\end{figure}

\vspace{-0.3cm}
\section{Conclusions}
We introduced MADreMIA, a model-agnostic membership inference signal amplifier for large generative models. By chaining repeated regenerations rather than relying on a single query, MADreMIA exploits a consistent asymmetry: member samples retain coherence across iterations while non-members drift and deteriorate. This signal generalizes across image, text, and audio generators, spanning IAR, diffusion, and LLM families. Our experimental results show that fusing trajectory-derived features with baseline MIA/DI scores further improves member/non-member separability, suggesting that iterative regeneration is a broadly applicable lens for privacy auditing and copyright attribution.


\subsubsection*{Acknowledgments}
We gratefully acknowledge Polish high-performance computing infrastructure PLGrid for providing computer facilities and support within computational grant no. PLG/2025/018391. This research was partially funded by National Science Centre, Poland, grant no: 2023/51/I/ST6/02854.

\clearpage
\bibliographystyle{plainnat}
\bibliography{main}

@inproceedings{shokri2017mia,
  title={Membership inference attacks against machine learning models},
  author={Shokri, Reza and Stronati, Marco and Song, Congzheng and Shmatikov, Vitaly},
  booktitle={2017 IEEE symposium on security and privacy (SP)},
  pages={3--18},
  year={2017},
  organization={IEEE}
}

@article{maini2021di,
  title={Dataset inference: Ownership resolution in machine learning},
  author={Maini, Pratyush and Yaghini, Mohammad and Papernot, Nicolas},
  journal={arXiv preprint arXiv:2104.10706},
  year={2021}
}

@article{singh2024evaluation,
  title={Evaluation data contamination in LLMs: how do we measure it and (when) does it matter?},
  author={Singh, Aaditya K and Kocyigit, Muhammed Yusuf and Poulton, Andrew and Esiobu, David and Lomeli, Maria and Szilvasy, Gergely and Hupkes, Dieuwke},
  journal={arXiv preprint arXiv:2411.03923},
  year={2024}
}

@article{coulter2024gettyvsstabilityai,
  title={Aiming for fairness: an exploration into getty images v. stability ai and its importance in the landscape of modern copyright law},
  author={Coulter, Matthew},
  journal={DePaul J. Art Tech. \& Intell. Prop. L},
  volume={34},
  pages={124},
  year={2024},
  publisher={HeinOnline}
}

@inproceedings{alemohammad2023mad,
  title={Self-consuming generative models go mad},
  author={Alemohammad, Sina and Casco-Rodriguez, Josue and Luzi, Lorenzo and Humayun, Ahmed Imtiaz and Babaei, Hossein and LeJeune, Daniel and Siahkoohi, Ali and Baraniuk, Richard},
  booktitle={The Twelfth International Conference on Learning Representations},
  year={2023}
}

@article{shumailov2024mad2,
    author = {Shumailov, Ilia and Shumaylov, Zakhar and Zhao, Yiren and Papernot, Nicolas and Anderson, Ross and Gal, Yarin},
    year = {2024},
    month = {07},
    pages = {755-759},
    title = {AI models collapse when trained on recursively generated data},
    volume = {631},
    journal = {Nature},
    doi = {10.1038/s41586-024-07566-y}
}

@article{li2024towards,
  author = {Li, Jingwei and Dong, Jing and He, Tianxing and Zhang, Jingzhao},
  title = {Towards Black-Box Membership Inference Attack for Diffusion Models},
  journal = {CoRR},
  volume = {abs/2405.20771},
  year = {2024},
  doi = {10.48550/arXiv.2405.20771},
  eprint = {2405.20771},
  archivePrefix = {arXiv},
  url = {https://doi.org/10.48550/arXiv.2405.20771}
}

@inproceedings{burg2021memorization1,
 author = {van den Burg, Gerrit and Williams, Chris},
 booktitle = {Advances in Neural Information Processing Systems},
 editor = {M. Ranzato and A. Beygelzimer and Y. Dauphin and P.S. Liang and J. Wortman Vaughan},
 pages = {27916--27928},
 publisher = {Curran Associates, Inc.},
 title = {On Memorization in Probabilistic Deep Generative Models},
 url = {https://proceedings.neurips.cc/paper_files/paper/2021/file/eae15aabaa768ae4a5993a8a4f4fa6e4-Paper.pdf},
 volume = {34},
 year = {2021}
}

@inproceedings{bai2021memorization2,
    author = {Bai, Ching-Yuan and Lin, Hsuan-Tien and Raffel, Colin and Kan, Wendy Chi-wen},
    title = {On Training Sample Memorization: Lessons from Benchmarking Generative Modeling with a Large-scale Competition},
    year = {2021},
    isbn = {9781450383325},
    publisher = {Association for Computing Machinery},
    address = {New York, NY, USA},
    url = {https://doi.org/10.1145/3447548.3467198},
    doi = {10.1145/3447548.3467198},
    booktitle = {Proceedings of the 27th ACM SIGKDD Conference on Knowledge Discovery \& Data Mining},
    pages = {2534–2542},
    numpages = {9},
    location = {Virtual Event, Singapore},
    series = {KDD '21}
}

@inproceedings{sakarvadia2024memorization3,
  title={Mitigating memorization in language models},
  author={Sakarvadia, Mansi and Ajith, Aswathy and Khan, Arham Mushtaq and Hudson, Nathaniel C and Geniesse, Caleb and Chard, Kyle and Yang, Yaoqing and Foster, Ian and Mahoney, Michael W},
  booktitle={The Thirteenth International Conference on Learning Representations},
  year={2024}
}

@article{gu2023memorization4,
  title={On memorization in diffusion models},
  author={Gu, Xiangming and Du, Chao and Pang, Tianyu and Li, Chongxuan and Lin, Min and Wang, Ye},
  journal={arXiv preprint arXiv:2310.02664},
  year={2023}
}

@article{hans2024memorization5,
  title={Be like a goldfish, don't memorize! mitigating memorization in generative llms},
  author={Hans, Abhimanyu and Wen, Yuxin and Jain, Neel and Kirchenbauer, John and Kazemi, Hamid and Singhania, Prajwal and Singh, Siddharth and Somepalli, Gowthami and Geiping, Jonas and Bhatele, Abhinav and others},
  journal={Advances in Neural Information Processing Systems},
  volume={37},
  pages={24022--24045},
  year={2024}
}

@inproceedings{
    wen2024memorization6,
    title={Detecting, Explaining, and Mitigating Memorization in Diffusion Models},
    author={Yuxin Wen and Yuchen Liu and Chen Chen and Lingjuan Lyu},
    booktitle={The Twelfth International Conference on Learning Representations},
    year={2024},
    url={https://openreview.net/forum?id=84n3UwkH7b}
}

@article{Chen2025memorization7,
  title={Enhancing Privacy-Utility Trade-offs to Mitigate Memorization in Diffusion Models},
  author={Chen Chen and Daochang Liu and Mubarak Shah and Chang Xu},
  journal={2025 IEEE/CVF Conference on Computer Vision and Pattern Recognition (CVPR)},
  year={2025},
  pages={8182-8191},
  url={https://api.semanticscholar.org/CorpusID:278129333}
}

@inproceedings{ye2022enhanced,
  title={Enhanced membership inference attacks against machine learning models},
  author={Ye, Jiayuan and Maddi, Aadyaa and Murakonda, Sasi Kumar and Bindschaedler, Vincent and Shokri, Reza},
  booktitle={Proceedings of the 2022 ACM SIGSAC conference on computer and communications security},
  pages={3093--3106},
  year={2022}
}

@inproceedings{carlini2021zlib,
  title={Extracting training data from large language models},
  author={Carlini, Nicholas and Tramer, Florian and Wallace, Eric and Jagielski, Matthew and Herbert-Voss, Ariel and Lee, Katherine and Roberts, Adam and Brown, Tom and Song, Dawn and Erlingsson, Ulfar and others},
  booktitle={30th USENIX security symposium (USENIX Security 21)},
  pages={2633--2650},
  year={2021}
}

@article{maini2024did,
  author = {Maini, Pratyush and Jia, Hengrui and Papernot, Nicolas and Dziedzic, Adam},
  title = {{LLM} Dataset Inference: Did you train on my dataset?},
  journal = {CoRR},
  volume = {abs/2406.06443},
  year = {2024},
  doi = {10.48550/arXiv.2406.06443},
  eprint = {2406.06443},
  archivePrefix = {arXiv},
  url = {https://doi.org/10.48550/arXiv.2406.06443}
}

@inproceedings{dubinski2025cdi,
  title={Cdi: Copyrighted data identification in diffusion models},
  author={Dubi{\'n}ski, Jan and Kowalczuk, Antoni and Boenisch, Franziska and Dziedzic, Adam},
  booktitle={Proceedings of the Computer Vision and Pattern Recognition Conference},
  pages={18674--18684},
  year={2025}
}

@inproceedings{chang2025camia,
  title={Context-aware membership inference attacks against pre-trained large language models},
  author={Chang, Hongyan and Shamsabadi, Ali Shahin and Katevas, Kleomenis and Haddadi, Hamed and Shokri, Reza},
  booktitle={Proceedings of the 2025 Conference on Empirical Methods in Natural Language Processing},
  pages={7299--7321},
  year={2025}
}

@article{shi2023mink,
  title={Detecting pretraining data from large language models},
  author={Shi, Weijia and Ajith, Anirudh and Xia, Mengzhou and Huang, Yangsibo and Liu, Daogao and Blevins, Terra and Chen, Danqi and Zettlemoyer, Luke},
  journal={arXiv preprint arXiv:2310.16789},
  year={2023}
}

@article{zhang2024minkpp,
  author = {Zhang, Jingyang and Sun, Jingwei and Yeats, Eric C. and Ouyang, Yang and Kuo, Martin and Zhang, Jianyi and Yang, Hao and Li, Hai Helen},
  title = {Min-{$k$}\%++: Improved Baseline for Detecting Pre-Training Data from Large Language Models},
  journal = {CoRR},
  volume = {abs/2404.02936},
  year = {2024},
  doi = {10.48550/arXiv.2404.02936},
  eprint = {2404.02936},
  archivePrefix = {arXiv},
  url = {https://doi.org/10.48550/arXiv.2404.02936}
}

@inproceedings{choquette2021labelonlymia,
  title={Label-only membership inference attacks},
  author={Choquette-Choo, Christopher A and Tramer, Florian and Carlini, Nicholas and Papernot, Nicolas},
  booktitle={International conference on machine learning},
  pages={1964--1974},
  year={2021},
  organization={PMLR}
}

@inproceedings{wu2024yoqo,
    title={You Only Query Once: An Efficient Label-Only Membership Inference Attack},
    author={YUTONG WU and Han Qiu and Shangwei Guo and Jiwei Li and Tianwei Zhang},
    booktitle={The Twelfth International Conference on Learning Representations},
    year={2024},
    url={https://openreview.net/forum?id=7WsivwyHrS}
}

@article{zhang2022healthmia,
  title={Membership inference attacks against synthetic health data},
  author={Zhang, Ziqi and Yan, Chao and Malin, Bradley A},
  journal={Journal of biomedical informatics},
  volume={125},
  pages={103977},
  year={2022},
  publisher={Elsevier}
}

@article{tao2025informia,
  author = {Tao, Jiashu and Shokri, Reza},
  title = {(Token-Level) {InfoRMIA}: Stronger Membership Inference and Memorization Assessment for {LLM}s},
  journal = {CoRR},
  volume = {abs/2510.05582},
  year = {2025},
  doi = {10.48550/arXiv.2510.05582},
  eprint = {2510.05582},
  archivePrefix = {arXiv},
  url = {https://doi.org/10.48550/arXiv.2510.05582}
}

@inproceedings{yeom2018privacy,
  title={Privacy risk in machine learning: Analyzing the connection to overfitting},
  author={Yeom, Samuel and Giacomelli, Irene and Fredrikson, Matt and Jha, Somesh},
  booktitle={2018 IEEE 31st computer security foundations symposium (CSF)},
  pages={268--282},
  year={2018},
  organization={IEEE}
}

@article{wang2004ssim,
  title={Image quality assessment: from error visibility to structural similarity},
  author={Wang, Zhou and Bovik, Alan C and Sheikh, Hamid R and Simoncelli, Eero P},
  journal={IEEE transactions on image processing},
  volume={13},
  number={4},
  pages={600--612},
  year={2004},
  publisher={IEEE}
}

@inproceedings{zhang2018lpips,
  title={The unreasonable effectiveness of deep features as a perceptual metric},
  author={Zhang, Richard and Isola, Phillip and Efros, Alexei A and Shechtman, Eli and Wang, Oliver},
  booktitle={Proceedings of the IEEE conference on computer vision and pattern recognition},
  pages={586--595},
  year={2018}
}

@inproceedings{
    zawalski2026codec,
    title={Detecting Data Contamination in {LLM}s via In-Context Learning},
    author={Micha{\l} Zawalski and Meriem Boubdir and Klaudia Ba{\l}azy and Besmira Nushi and Pablo Ribalta},
    booktitle={The Fourteenth International Conference on Learning Representations},
    year={2026},
    url={https://openreview.net/forum?id=YlpaaYxx4t}
}

@inproceedings{maini2025reassessing,
  title={Reassessing EMNLP 2024’s Best Paper: Does Divergence-Based Calibration for MIAs Hold Up?},
  author={Maini, Pratyush and Suri, Anshuman},
  booktitle={The Fourth Blogpost Track at ICLR 2025}
}

@inproceedings{kowalczuk2025privacy,
    title={Privacy Attacks on Image AutoRegressive Models},
    author={Antoni Kowalczuk and Jan Dubi{\'n}ski and Franziska Boenisch and Adam Dziedzic},
    booktitle={Forty-second International Conference on Machine Learning},
    year={2025},
    url={https://openreview.net/forum?id=7SXXczJCWP}
}

@misc{tian2024var,
      title={Visual Autoregressive Modeling: Scalable Image Generation via Next-Scale Prediction}, 
      author={Keyu Tian and Yi Jiang and Zehuan Yuan and Bingyue Peng and Liwei Wang},
      year={2024},
      eprint={2404.02905},
      archivePrefix={arXiv},
      primaryClass={cs.CV},
      url={https://arxiv.org/abs/2404.02905}, 
}

@misc{yu2024rar,
      title={Randomized Autoregressive Visual Generation}, 
      author={Qihang Yu and Ju He and Xueqing Deng and Xiaohui Shen and Liang-Chieh Chen},
      year={2024},
      eprint={2411.00776},
      archivePrefix={arXiv},
      primaryClass={cs.CV},
      url={https://arxiv.org/abs/2411.00776}, 
}

@misc{fei2024dit,
      title={Scaling Diffusion Transformers to 16 Billion Parameters}, 
      author={Zhengcong Fei and Mingyuan Fan and Changqian Yu and Debang Li and Junshi Huang},
      year={2024},
      eprint={2407.11633},
      archivePrefix={arXiv},
      primaryClass={cs.CV},
      url={https://arxiv.org/abs/2407.11633}, 
}

@inproceedings{bao2022uvit,
  title={All are Worth Words: A ViT Backbone for Diffusion Models},
  author={Bao, Fan and Nie, Shen and Xue, Kaiwen and Cao, Yue and Li, Chongxuan and Su, Hang and Zhu, Jun},
  booktitle = {CVPR},
  year={2023}
}

@article{zhang2022opt,
  title={Opt: Open pre-trained transformer language models},
  author={Zhang, Susan and Roller, Stephen and Goyal, Naman and Artetxe, Mikel and Chen, Moya and Chen, Shuohui and Dewan, Christopher and Diab, Mona and Li, Xian and Lin, Xi Victoria and others},
  journal={arXiv preprint arXiv:2205.01068},
  year={2022}
}

@inproceedings{biderman2023pythia,
author = {Biderman, Stella and Schoelkopf, Hailey and Anthony, Quentin and Bradley, Herbie and O'Brien, Kyle and Hallahan, Eric and Khan, Mohammad Aflah and Purohit, Shivanshu and Prashanth, USVSN Sai and Raff, Edward and Skowron, Aviya and Sutawika, Lintang and Van Der Wal, Oskar},
title = {Pythia: a suite for analyzing large language models across training and scaling},
year = {2023},
publisher = {JMLR.org},
booktitle = {Proceedings of the 40th International Conference on Machine Learning},
articleno = {102},
numpages = {34},
location = {, Honolulu, Hawaii, USA, },
series = {ICML'23}
}

@article{touvron2023llama,
  title={Llama: Open and efficient foundation language models},
  author={Touvron, Hugo and Lavril, Thibaut and Izacard, Gautier and Martinet, Xavier and Lachaux, Marie-Anne and Lacroix, Timoth{\'e}e and Rozi{\`e}re, Baptiste and Goyal, Naman and Hambro, Eric and Azhar, Faisal and others},
  journal={arXiv preprint arXiv:2302.13971},
  year={2023}
}

@inproceedings{groeneveld2024olmo,
  title={OLMo: Accelerating the science of language models},
  author={Groeneveld, Dirk and Beltagy, Iz and Walsh, Evan and Bhagia, Akshita and Kinney, Rodney and Tafjord, Oyvind and Jha, Ananya and Ivison, Hamish and Magnusson, Ian and Wang, Yizhong and others},
  booktitle={Proceedings of the 62nd Annual Meeting of the Association for Computational Linguistics (Volume 1: Long Papers)},
  pages={15789--15809},
  year={2024}
}

@inproceedings{qian2019autovc,
  title={Autovc: Zero-shot voice style transfer with only autoencoder loss},
  author={Qian, Kaizhi and Zhang, Yang and Chang, Shiyu and Yang, Xuesong and Hasegawa-Johnson, Mark},
  booktitle={International Conference on Machine Learning},
  pages={5210--5219},
  year={2019},
  organization={PMLR}
}

@inproceedings{li2023freevc,
  title={Freevc: Towards high-quality text-free one-shot voice conversion},
  author={Li, Jingyi and Tu, Weiping and Xiao, Li},
  booktitle={ICASSP 2023-2023 IEEE International Conference on Acoustics, Speech and Signal Processing (ICASSP)},
  pages={1--5},
  year={2023},
  organization={IEEE}
}

@article{heusel2017fid,
  title={Gans trained by a two time-scale update rule converge to a local nash equilibrium},
  author={Heusel, Martin and Ramsauer, Hubert and Unterthiner, Thomas and Nessler, Bernhard and Hochreiter, Sepp},
  journal={Advances in neural information processing systems},
  volume={30},
  year={2017}
}

@article{kilgour2018fad,
  title={Fr$\backslash$'echet audio distance: A metric for evaluating music enhancement algorithms},
  author={Kilgour, Kevin and Zuluaga, Mauricio and Roblek, Dominik and Sharifi, Matthew},
  journal={arXiv preprint arXiv:1812.08466},
  year={2018}
}

@inproceedings{deng2009imagenet,
  title={Imagenet: A large-scale hierarchical image database},
  author={Deng, Jia and Dong, Wei and Socher, Richard and Li, Li-Jia and Li, Kai and Fei-Fei, Li},
  booktitle={2009 IEEE conference on computer vision and pattern recognition},
  pages={248--255},
  year={2009},
  organization={Ieee}
}

@article{veit2016coco,
  title={Coco-text: Dataset and benchmark for text detection and recognition in natural images},
  author={Veit, Andreas and Matera, Tomas and Neumann, Lukas and Matas, Jiri and Belongie, Serge},
  journal={arXiv preprint arXiv:1601.07140},
  year={2016}
}

@article{yamagishi2019vctk,
  title={CSTR VCTK Corpus: English multi-speaker corpus for CSTR voice cloning toolkit (version 0.92)},
  author={Yamagishi, Junichi and Veaux, Christophe and MacDonald, Kirsten},
  journal={The Rainbow Passage which the speakers read out can be found in the International Dialects of English Archive:(http://web. ku. edu/\~{} idea/readings/rainbow. htm).},
  year={2019},
  publisher={University of Edinburgh. The Centre for Speech Technology Research (CSTR)}
}

@inproceedings{zen2019libritts,
  title={LibriTTS: A Corpus Derived from LibriSpeech for Text-to-Speech},
  author={Heiga Zen and Viet Dang and Robert A. J. Clark and Yu Zhang and Ron J. Weiss and Ye Jia and Z. Chen and Yonghui Wu},
  booktitle={Interspeech},
  year={2019},
  url={https://api.semanticscholar.org/CorpusID:102352475}
}

@inproceedings{soldaini2024dolma,
  title={Dolma: An open corpus of three trillion tokens for language model pretraining research},
  author={Soldaini, Luca and Kinney, Rodney and Bhagia, Akshita and Schwenk, Dustin and Atkinson, David and Authur, Russell and Bogin, Ben and Chandu, Khyathi and Dumas, Jennifer and Elazar, Yanai and others},
  booktitle={Proceedings of the 62nd Annual Meeting of the Association for Computational Linguistics (Volume 1: Long Papers)},
  pages={15725--15788},
  year={2024}
}

@inproceedings{duan2024mimir,
      title={Do Membership Inference Attacks Work on Large Language Models?}, 
      author={Michael Duan and Anshuman Suri and Niloofar Mireshghallah and Sewon Min and Weijia Shi and Luke Zettlemoyer and Yulia Tsvetkov and Yejin Choi and David Evans and Hannaneh Hajishirzi},
      year={2024},
      booktitle={Conference on Language Modeling (COLM)},
}

\appendix
\clearpage
\section{Impact Statement}
This work advances methods for auditing generative models by improving membership and dataset inference through chained regeneration. The primary positive impact is stronger accountability: MADreMIA can help detect memorization of sensitive, proprietary, or benchmark data, supporting privacy audits, copyright verification, and unlearning validation across model families and modalities.

While enhanced inference capabilities can assist in model auditing and transparency, they also require responsible application to avoid potential misuse. We frame MADreMIA as a tool for research evaluation, compliance monitoring, and internal red-teaming. It is important to note that our method provides statistical evidence rather than a definitive proof of data inclusion; therefore, results should be interpreted alongside additional forensic and procedural evidence within a broader data governance framework.

\section{Limitations}
While our proposed framework is designed to be cross-modal and model-agnostic, our experimental scope is naturally constrained by several practical and theoretical factors. Most notably, we do not conduct full Membership Inference Attack (MIA) evaluations on audio generation models. Although our initial signal-degradation experiments indicate that iterative trajectory features exist in the audio domain, the literature currently lacks established single-step baselines tailored for these architectures, leaving MIA for audio models untested. Furthermore, while our framework is conceptually compatible with restricted setups, our current empirical evaluations rely on gray-box access to exact next-token logits, meaning that strictly black-box MIA remains untested in our work. Operationally, the primary limitation of our method is its scalability; the iterative regeneration loop inherently introduces a linear computational overhead by requiring multiple forward passes per sample. From a theoretical perspective, our core assumptions A1 and A3 are only partially satisfied in practice, as demonstrated by the empirical measurements in Table~2. Finally, our evaluations may be susceptible to distribution-shift confounds---where trajectory differences might stem from inherent dataset mismatches rather than pure memorization---and the exploratory findings presented in Section~5.6 are based on preliminary small-$n$ evidence that will require larger-scale validation in future work.

\section{LLM Usage}
Large language models were used to improve the readability and clarity of portions of the manuscript, as well as to provide feedback during the writing and revision process. The authors verified all technical statements, citations, and claims and take full responsibility for the final content.

\section{Method Overview}
\label{app:method}

\begin{figure}[t]
  \centering
  \includegraphics[width=\linewidth]{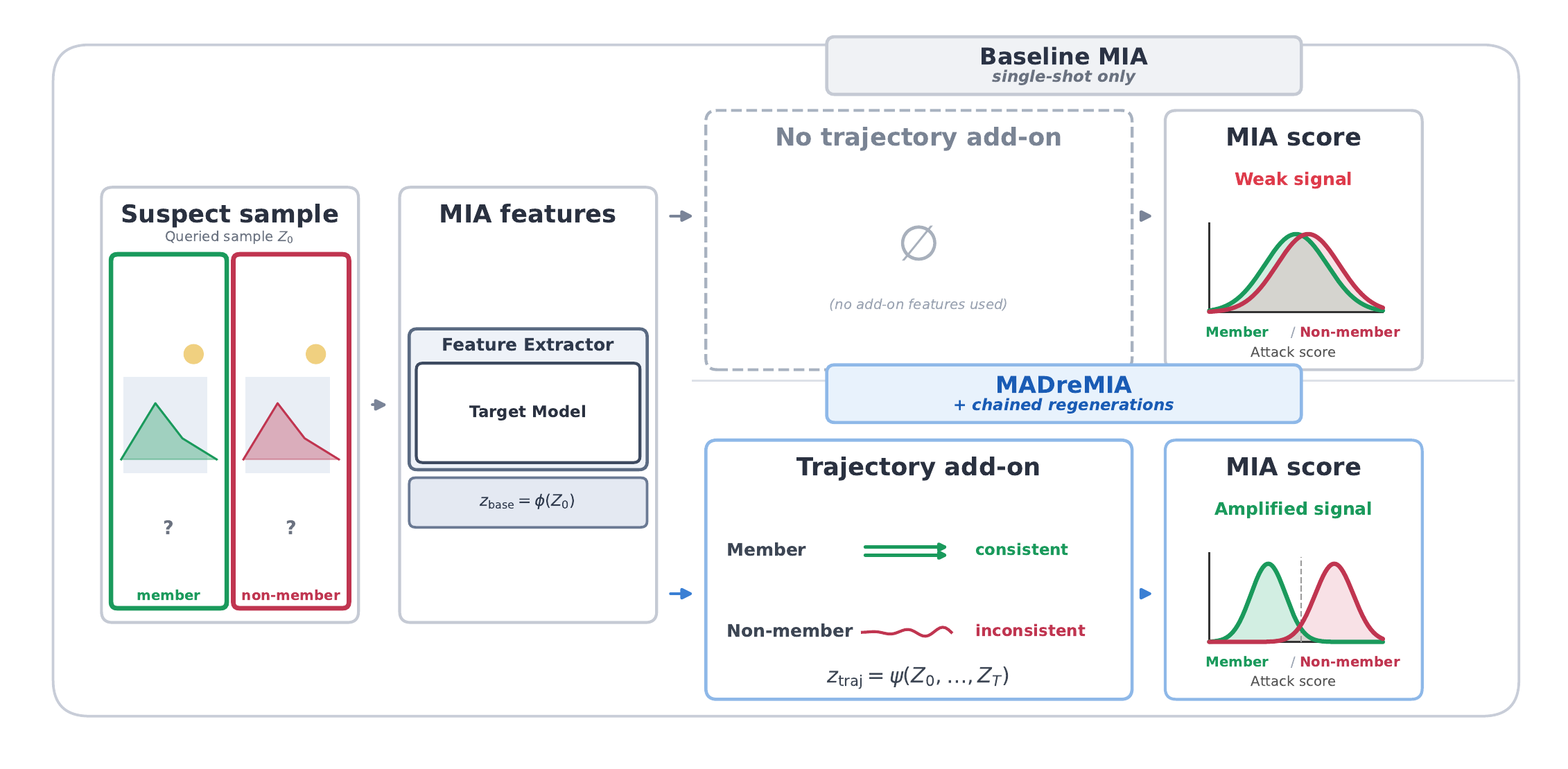}
  \caption{%
    \textbf{MADreMIA amplifies membership signal via trajectory features.}
    Left: both methods share the same suspect sample $Z_0$ and base features $z_{\mathrm{base}}=\phi(Z_0)$.
    Top: baseline one-shot MIA uses no trajectory add-on and yields weak member/non-member separability.
    Bottom: MADreMIA adds trajectory features $z_{\mathrm{traj}}=\psi(Z_0,\ldots,Z_T)$ from chained regeneration.
    Members exhibit \emph{consistent} trajectories (slow drift from $Z_0$), whereas non-members drift \emph{inconsistently}.
    The fused representation $\tilde{z}=[z_{\mathrm{base}}\|z_{\mathrm{traj}}]$ is scored by the same plug-in estimator $h$ (e.g., L1-regularized logistic regression), amplifying the attack signal without changing the scorer.%
  }
  \label{fig:madremia-teaser}
\end{figure}


\section{Extended Related Works}
\label{app:related}
Our work builds upon three intersecting lines of prior research: the characterization of data memorization in generative models, the evolution of membership inference, and the dynamics of model collapse during recursive generation.

\subsection{Memorization}
Memorization — the tendency of generative models to reproduce training examples rather than generate novel samples — has been studied across multiple model families and from both measurement and mitigation perspectives. \citet{burg2021memorization1} formalized the problem for probabilistic generative models such as VAEs, showing that memorization differs fundamentally from mode collapse and overfitting and is not captured by commonly-used nearest-neighbor tests. For diffusion models, \citet{gu2023memorization4} show that the denoising score matching objective has a closed-form optimum that can only replicate training samples, and introduces the EMM metric to quantify how dataset size and model configuration govern the generalization-to-memorization transition. \citet{sakarvadia2024memorization3} localize this phenomenon through bright-ending cross-attention patterns, while the sharpness-based framework of~\citep{wen2024memorization6} justifies score-difference memorization metrics and proposes mitigation via sharpness-aware regularization of the initial noise. The benchmarking study~\citep{bai2021memorization2} demonstrated that standard evaluation metrics fail to surface memorization even in competitive settings. Mitigation has been tackled both for LLMs, where \citet{hans2024memorization5} propose the goldfish loss that excludes randomly sampled token subsets from the training objective to prevent verbatim reproduction, and for text-to-image diffusion models, where \citet{Chen2025memorization7} address the privacy–utility tension by combining prompt re-anchoring with semantic prompt search to improve both dimensions simultaneously.

\subsection{Membership/Dataset Inference}
A second line of work investigates whether specific examples or datasets can be identified from model behavior. Because individual Membership Inference Attacks (MIAs) can be confounded by distribution shifts~\citep{maini2024did}, recent literature often favors Dataset Inference (DI), which aggregates feature evidence across many samples to statistically detect training data usage~\citep{maini2025reassessing,dubinski2025cdi,kowalczuk2025privacy}. Concurrently, individual MIA methods must adapt to increasingly restrictive black-box deployments. Furthermore, approaches based on training multiple shadow models to learn membership distributions~\citep{ye2022enhanced,carlini2021zlib} are now computationally infeasible for massive modern architectures. Consequently, modern attacks must extract signals using only limited outputs rather than internal weights or gradients~\citep{zhang2024minkpp,chang2025camia,tao2025informia}.

In these restricted settings, recent black-box attacks heavily rely on output variations. For example, \citet{li2024towards} perform MIAs on diffusion models by repeatedly perturbing a target image via an API, averaging the results, and comparing them to the original sample. However, in an interrogation analogy, this approach merely asks multiple paraphrased versions of the exact same question. Because the target sample is perturbed independently each time, the query does not dynamically evolve in response to the model's previous answers, leaving deeper structural memorization unexploited.

\subsection{Model Collapse}
The last, but very important point is the literature on recursive self-training in generative models. \citet{alemohammad2023mad} showed that self-consuming generative loops lead to progressive degradation in quality or diversity when insufficient fresh real data is injected at each generation, a phenomenon they term Model Autophagy Disorder. Their analysis is especially important for our setting because it frames repeated regeneration not as a neutral operation, but as a process that can magnify latent properties of the learned distribution. Closely related, \citet{shumailov2024mad2} showed that recursively training on model-generated data causes model collapse, where tails of the original distribution disappear and learned behaviour drifts toward degenerate approximations. Taken together, these works suggest that iterative generation is structurally revealing: under repeated reuse, memorized or high-density regions may persist differently from non-member examples, while generic outputs may drift or collapse. Our method turns this insight into a privacy-auditing mechanism: rather than studying recursive generation as a training-time pathology, \textbf{we exploit chained regeneration at inference time} to amplify membership-relevant differences.

\section{Model Details}
\label{app:model_details}
In our experiments, we consider two vision model families: image autoregressive models (IARs) and diffusion models. The IAR category includes VAR~\cite{tian2024var} and RAR~\cite{yu2024rar} variants, while the diffusion category includes DiT-MoE~\cite{fei2024dit} and UViT-T2I~\cite{bao2022uvit}. Furthermore, as others modalities, we evaluate large language models (LLMs) and voice conversion (VC) models. The LLMs include Pythia~\cite{biderman2023pythia}, OLMo~\cite{groeneveld2024olmo}, OPT~\cite{zhang2022opt}, and Llama~\cite{touvron2023llama}, while the VC models consist of AutoVC~\cite{qian2019autovc} and FreeVC~\cite{li2023freevc}. Across all settings, we focus on representative, high-performing model variants.

\vspace{-0.2cm}
\begin{table*}[h!]
    \centering
    \caption{Vision model details.}
    \label{tab:vision_model_details}
    \tiny
    \begin{tabular}{lccccccccc}
        \toprule
        & \multicolumn{6}{c}{\textbf{IAR Models}} & \multicolumn{3}{c}{\textbf{Diffusion Models}} \\
        \cmidrule(r){2-7} \cmidrule(r){8-10}
        & VAR-d30 & VAR-d24 & VAR-d20 & RAR-XXL & RAR-XL & RAR-L
        & DiT-MoE-G & DiT-MoE-XL & UViT-T2I-Deep \\
        \midrule
        \textbf{Model parameters} & 2.1B & 1.0B & 600M & 1.5B & 955M & 462M & 16.5B & 4.1B & 141M \\
        \textbf{Training epochs}   & 350 & 300 & 250 & 400 & 400 & 400  & — & — & — \\
        \textbf{FID}              & 1.92 & 2.33 & 2.95 & 1.48 & 1.50 & 1.70 & 1.72 & 2.10 & 5.48 \\
        \bottomrule
    \end{tabular}
\end{table*}

\vspace{-0.4cm}

\begin{table*}[h]
\centering

\begin{minipage}[h]{0.45\textwidth}
    \centering
    \vspace{0pt}
    \captionof{table}{Language model details.}
    \label{tab:llm_model_details}
    \tiny
    \begin{tabular}{l c c c c}
        \toprule
        & \textbf{OLMo} & \textbf{Llama} & \textbf{Pythia} & \textbf{OPT} \\
        \midrule
        \textbf{Model parameters} & 7B & 13B & 6.9B & 6.7B \\
        \textbf{Training tokens} & 2.46T & 1T & 300B & 180B \\
        \bottomrule
    \end{tabular}
\end{minipage}
\hfill
\begin{minipage}[h]{0.4\textwidth}
    \centering
    \vspace{0pt}
    \captionof{table}{Audio model details.}
    \label{tab:audio_model_details}
    \tiny
    \begin{tabular}{l c c}
        \toprule
        & \textbf{AutoVC} & \textbf{FreeVC} \\
        \midrule
        \textbf{Model parameters} & 28M & 39M \\
        \textbf{Training data (hours)} & 44 & 40 \\
        \textbf{SMOS (seen-to-seen)} & 3.5 & 4.1 \\
        \bottomrule
    \end{tabular}
\end{minipage}

\end{table*}
\vspace{-0.3cm}

\section{Dataset Details}
\label{app:dataset_details}
For vision and audio models that have publicly known and available train/test splits we use these datasets. For most LLMs we use established MIA benchmarks (e.g. WikiMIA), but for OLMo, we use their corresponding training sets and the Global News as non-member set, as suggested in~\cite{zawalski2026codec}.
\vspace{-0.3cm}

\begin{table}[h!]
    \centering
    \caption{Datasets used to construct member and non-member sets for each model family in our experiments, spanning vision, language, and speech domains.}
    \label{tab:data_details}
    \small
    \begin{tabular}{lcc}
        \toprule
        \textbf{Model} & \textbf{Members} & \textbf{Non-members} \\
        \midrule
        VAR        & ImageNet~\cite{deng2009imagenet} & ImageNet \\
        RAR        & ImageNet & ImageNet \\
        DiT-MoE      & ImageNet & ImageNet \\
        UViT-T2I   & COCO~\cite{veit2016coco} & COCO \\
        \midrule
        Pythia       & Mimir~\cite{duan2024mimir} & Mimir \\
        OLMo       & Dolma~\cite{soldaini2024dolma} & Global News \\
        Llama     & WikiMIA & WikiMIA \\
        OPT       & WikiMIA & WikiMIA \\
        \midrule
        AutoVC      & VCTK~\cite{yamagishi2019vctk} & LibriTTS~\cite{zen2019libritts} \\
        FreeVC     & VCTK & LibriTTS \\
        \bottomrule
    \end{tabular}
\end{table}

Importantly, for the Pythia-6.9B we use the Mimir dataset~\citep{duan2024mimir} which consists of 6 subsets: \textit{arxiv}, \textit{dm\_mathematics}, \textit{github}, \textit{hackernews}, \textit{pubmed\_central}, and \textit{wikipedia\_(en)}. We concatenate all these subsets and randomly select samples from the pool. We use the \textit{ngram\_7\_0.2} data split. For the rest of the models, we employ their corresponding datasets' train split as members and val/test split as nonmembers.

\section{Metrics Details}
\label{app:metrics}
The following metrics are computed over the sequence of model outputs collected across MADreMIA iterations, capturing how the model's generative behavior evolves under repeated generation.

\subsection{Features for Language Models}

\begin{description}
\item[Jaccard Similarity:]
Measures the lexical overlap between the model's output at a given iteration and its initial response, computed over trigrams. A high Jaccard similarity indicates that the model rigidly reproduces the same surface forms across iterations, which is characteristic of memorized content.
\[
J(A,B) = \frac{|A \cap B|}{|A \cup B|}
\]
\item[Token Diversity:]
Quantifies the divergence between the token probability distribution at the current iteration $P$ and the initial distribution $Q$. Large values indicate that the model's vocabulary preferences shift substantially during reconstruction, reflecting instability in its output distribution.
\[
    D_{KL}(P \parallel Q) = \sum_{x \in \mathcal{X}} P(x) \log \left( \frac{P(x)}{Q(x)} \right)
\]

\item[Token Distribution Shift:]
We define it as a Jensen-Shannon Divergence, which is a symmetric and bounded variant of KLD that measures the distributional distance between $P$ and $Q$ via their mixture $M$. Compared to KLD, JSD is well-defined even when the supports of $P$ and $Q$ do not fully overlap, making it a more numerically stable measure of distributional drift across iterations.
\begin{align*}
    \mathrm{JSD}(P \parallel Q) &= \frac{1}{2} D_{KL}(P \parallel M) + \frac{1}{2} D_{KL}(Q \parallel M) \\
    \text{where } M &= \frac{1}{2}(P + Q) \nonumber
\end{align*}

\item[Predictive Entropy:]
Measures the uncertainty of the model's next-token distribution over the full vocabulary $\mathcal{V}$. Low entropy indicates that the model assigns high probability mass to a single token — consistent with confident, memorized reproduction — whereas high entropy reflects diffuse, uncertain predictions.
\begin{equation*}
    H(Y \mid \mathbf{x}) = - \sum_{c \in \mathcal{V}} P(y=c \mid \mathbf{x}) \log P(y=c \mid \mathbf{x})
\end{equation*}

\item[Margin:]
Captures the decisiveness of the model's token predictions by computing the difference in probability between the top-ranked and second-ranked tokens. A large margin indicates high confidence in a specific token, which may signal memorized recall, while a small margin reflects genuine uncertainty between competing continuations.
\begin{equation*}
    M = P(\hat{y}_1 \mid \mathbf{x}) - P(\hat{y}_2 \mid \mathbf{x})
\end{equation*}
\end{description}

\subsection{Features for Vision Models}

\begin{description}
\item[Mean Squared Error (MSE):]
Measures the average pixel-level reconstruction error between the generated image at a given iteration and the original input. Lower MSE indicates that the model consistently reproduces fine-grained pixel details across iterations, which is a strong signal of memorization.
\[
\mathrm{MSE}(x, \hat{x}) = \frac{1}{N} \sum_{i=1}^{N} \left( x_i - \hat{x}_i \right)^2
\]
\item[Structural Similarity Index Measure (SSIM)~\citep{wang2004ssim}:]
Evaluates perceptual similarity between the reconstructed image $\hat{x}$ and the original $x$ by jointly comparing luminance, contrast, and structural information across local image patches. Unlike MSE, SSIM is sensitive to perceptual distortions that are meaningful to human observers, and its stability across iterations serves as a complementary signal to pixel-level metrics.
\[
\mathrm{SSIM}(x, \hat{x}) = \frac{(2\mu_x \mu_{\hat{x}} + c_1)(2\sigma_{x\hat{x}} + c_2)}{(\mu_x^2 + \mu_{\hat{x}}^2 + c_1)(\sigma_x^2 + \sigma_{\hat{x}}^2 + c_2)}
\]
where $\mu_x$, $\mu_{\hat{x}}$ are local means, $\sigma_x^2$, $\sigma_{\hat{x}}^2$ are local variances, $\sigma_{x\hat{x}}$ is the cross-covariance, and $c_1$, $c_2$ are stabilization constants.

\item[Learned Perceptual Image Patch Similarity (LPIPS)~\citep{zhang2018lpips}:]
Quantifies perceptual dissimilarity between $x$ and $\hat{x}$ using deep feature representations extracted from a pretrained network $\phi$. By operating in a learned feature space rather than pixel space, LPIPS captures high-level semantic and textural differences that are invisible to MSE or SSIM, making it particularly sensitive to cases where a model reproduces semantic content while varying low-level details.
\[
\mathrm{LPIPS}(x, \hat{x}) = \sum_{l} \frac{1}{H_l W_l} \sum_{h,w} \left\| w_l \odot \left( \phi_l(x)_{hw} - \phi_l(\hat{x})_{hw} \right) \right\|_2^2
\]
where $\phi_l$ denotes the feature map at layer $l$ of the pretrained network and $w_l$ are learned channel-wise weights.
\end{description}

\section{Additional Dataset Inference Results}
\Cref{fig:di-additional} extends our dataset inference evaluation to Llama-13B and VAR-d30. On Llama-13B, augmented variants reach the 95\% confidence threshold faster than the baseline, with the Combined and Quality signals leading, though convergence is noisier at low sample counts. On VAR-d30, the benefit is more pronounced: augmented variants cross the threshold at roughly 100 samples compared to over 200 for the baseline, with all three signal types outperforming it consistently. The significance histograms corroborate these findings — the Combined variant shifts the $-\log_{10}(p)$ distribution rightward on both models, confirming that trajectory features yield stronger per-trial evidence.

\begin{figure}[t]
  \centering

  \begin{subfigure}[b]{0.49\columnwidth}
      \centering
      \includegraphics[width=\linewidth]{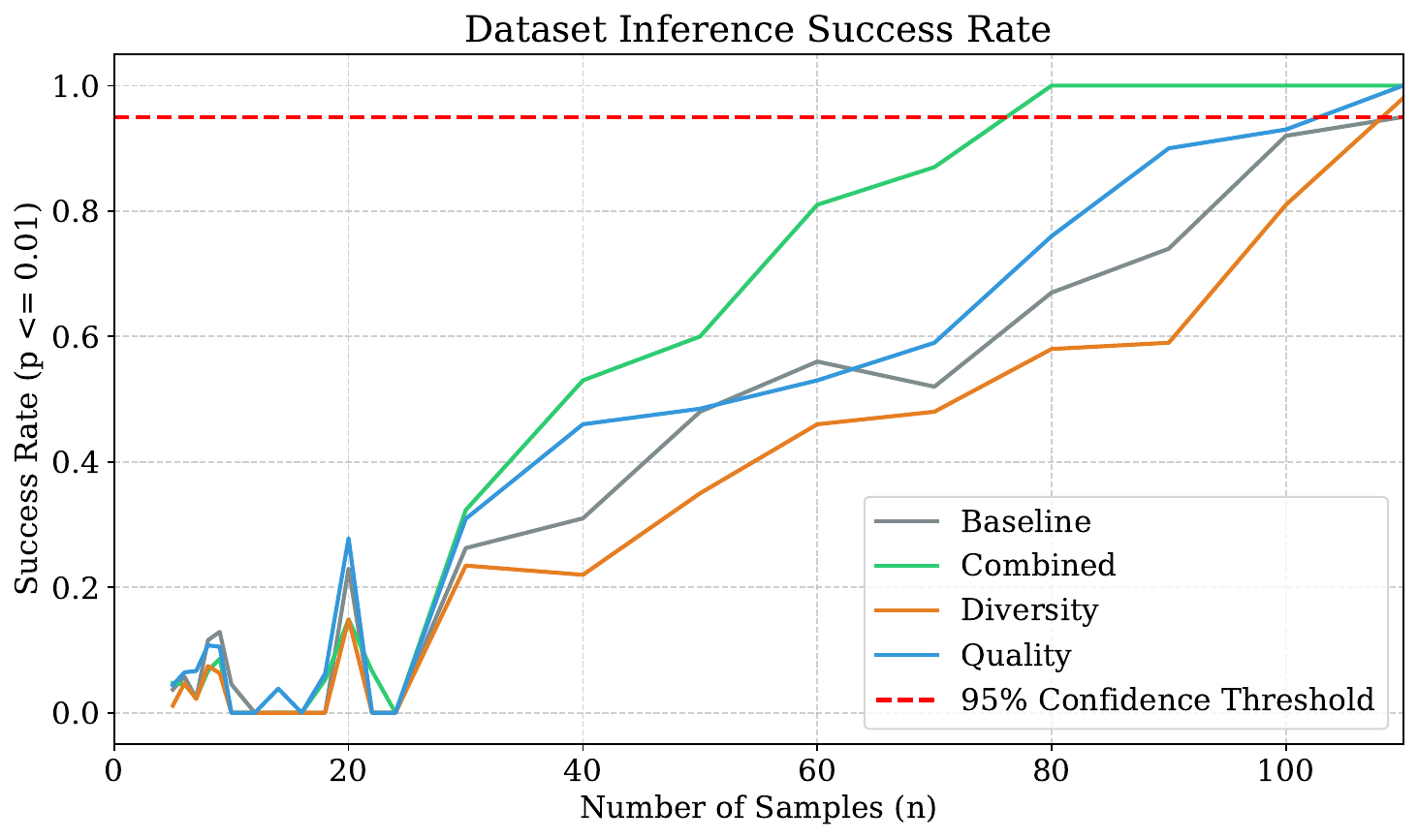}
      \caption{Llama-13B}
  \end{subfigure}
  \hfill
  \begin{subfigure}[b]{0.49\columnwidth}
      \centering
      \includegraphics[width=\linewidth]{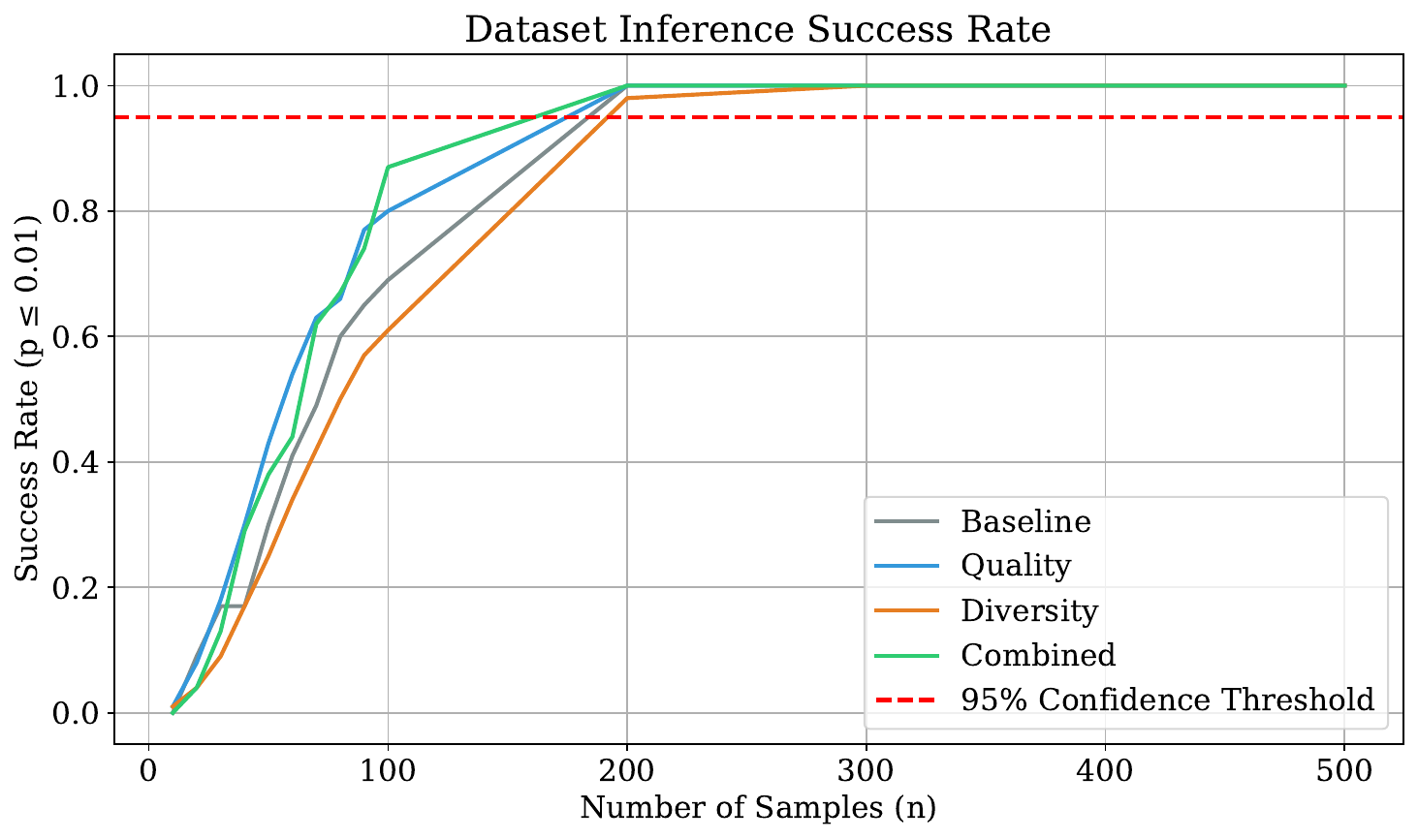}
      \caption{VAR-d30}
  \end{subfigure}
  \vspace{0.6em}

  \begin{subfigure}[b]{0.49\columnwidth}
      \centering
      \includegraphics[width=\linewidth]{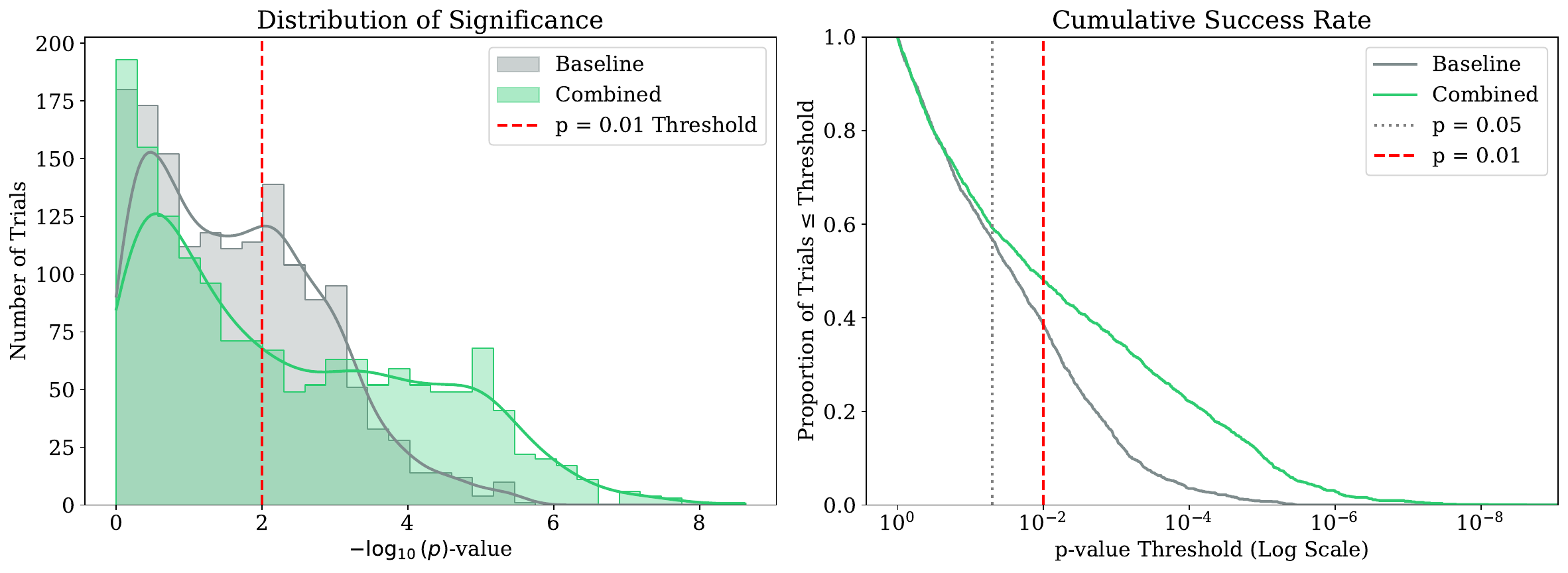}
      \caption{Llama-13B}
  \end{subfigure}
  \hfill
  \begin{subfigure}[b]{0.49\columnwidth}
      \centering
      \includegraphics[width=\linewidth]{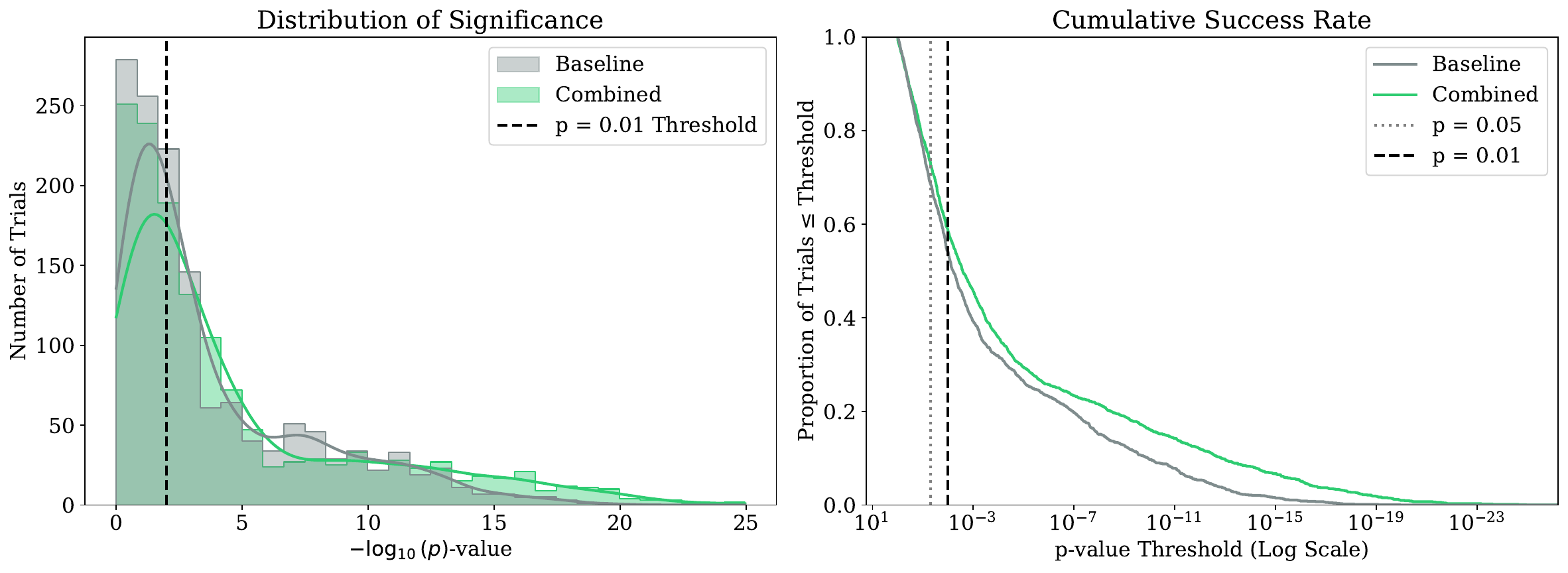}
      \caption{VAR-d30}
  \end{subfigure}

  \caption{DI performance on additional models.}
  \label{fig:di-additional}
\end{figure}

\section{Precision and Recall for Generative Models}
\label{app:rpc}
\Cref{fig:prec-rec} shows Precision and Recall across iterations for VAR-d30 and DiT-MoE-XL. In both models and both metrics, members consistently score higher than non-members throughout all iterations, confirming that the membership signal is stable and model-agnostic. Notably, the gap between members and non-members widens as iterations progress, indicating that chained regeneration amplifies the underlying asymmetry rather than merely preserving it.

\begin{figure}[t]
  \centering

  \begin{subfigure}[b]{0.24\columnwidth}
      \centering
      \includegraphics[width=\linewidth]{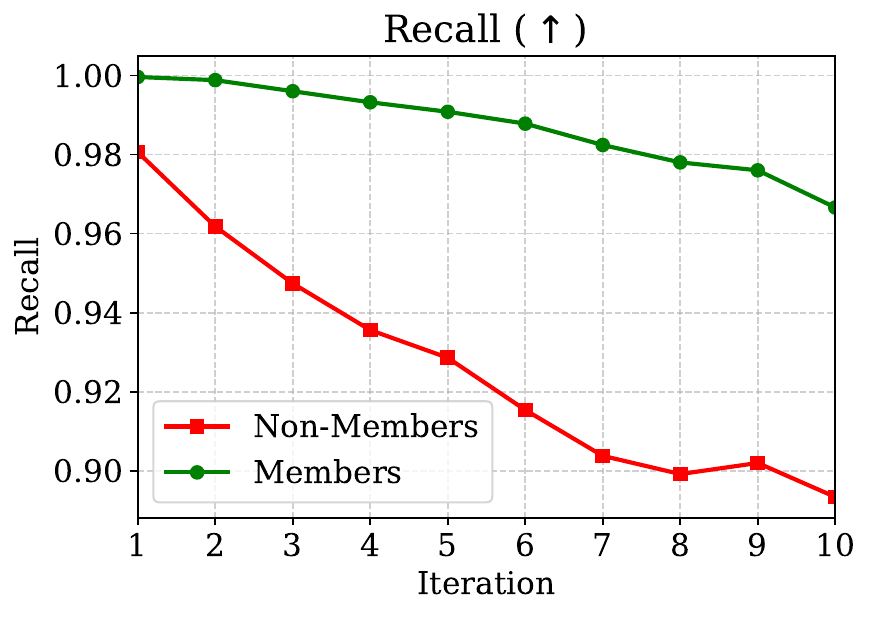}
      \caption{VAR-d30 (Rec.)}
  \end{subfigure}
  \hfill
  \begin{subfigure}[b]{0.24\columnwidth}
      \centering
      \includegraphics[width=\linewidth]{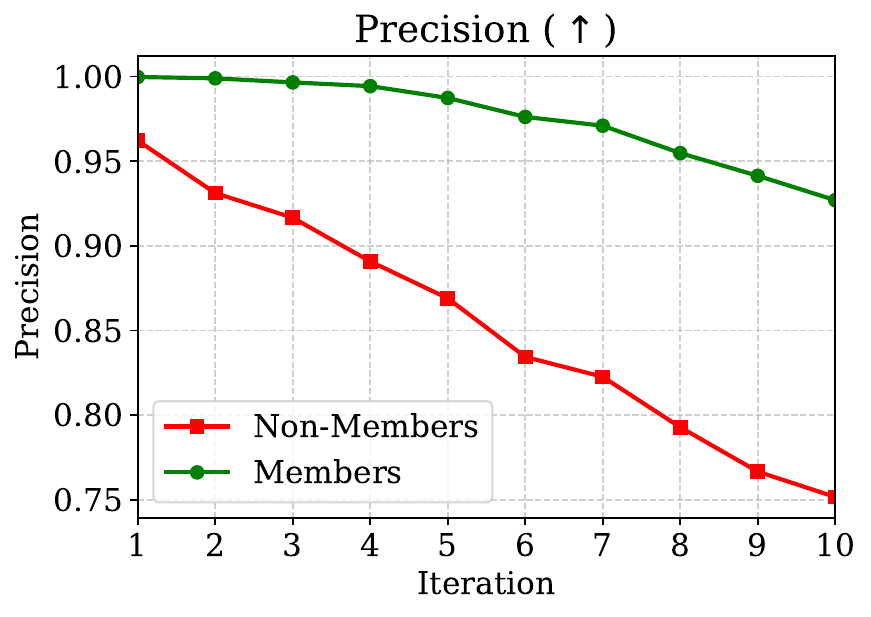}
      \caption{VAR-d30 (Prec.)}
  \end{subfigure}
  \hfill
  \begin{subfigure}[b]{0.24\columnwidth}
      \centering
      \includegraphics[width=\linewidth]{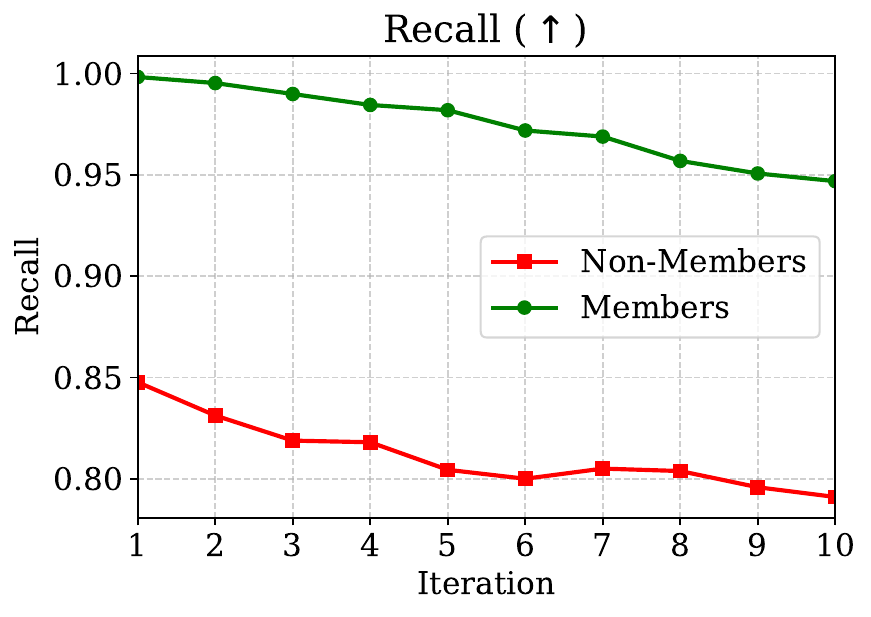}
      \caption{DiT-MoE-XL (Rec.)}
  \end{subfigure}
  \hfill
  \begin{subfigure}[b]{0.24\columnwidth}
      \centering
      \includegraphics[width=\linewidth]{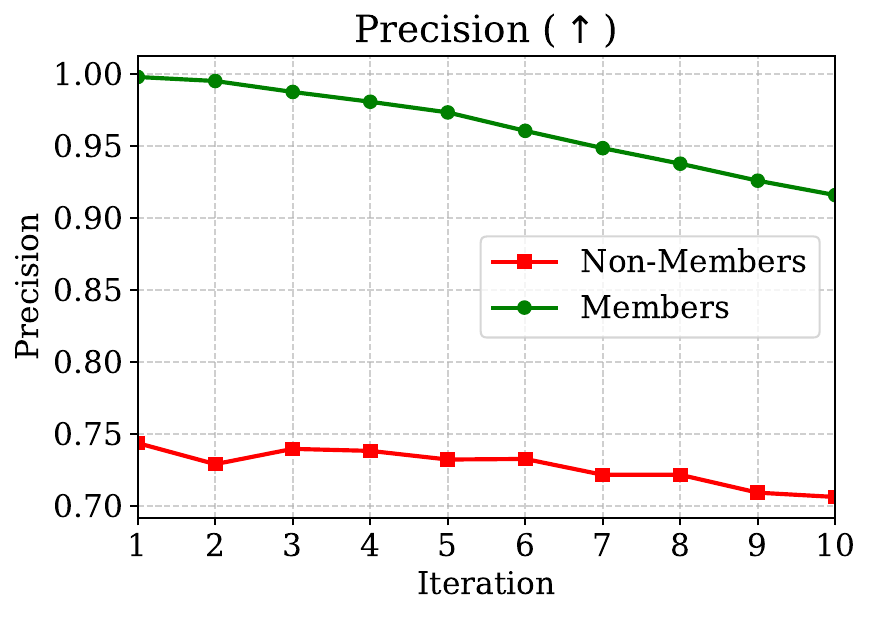}
      \caption{DiT-MoE-XL (Prec.)}
  \end{subfigure}

  \caption{Precision and Recall across models.}
  \label{fig:prec-rec}
\end{figure}

\section{Getty Images Case}
\label{app:getty}
As a practical case study, we consider the Getty Images v.~Stability AI dispute~\cite{coulter2024gettyvsstabilityai} and evaluate whether chained regeneration can distinguish images that are plausibly associated with the Stable Diffusion training distribution from images that are very unlikely to have been included. We use Stable Diffusion~1.5 as the target model. For the positive pool, we extract 2{,}000 images from LAION-2B whose metadata contains the string \textit{gettyimages} and treat them as \textit{members}. For the negative pool, we collect 2{,}000 images from the Getty Images website whose upload date is after January~1,~2025, and treat them as \textit{non-members}. Because these images post-date the original Stable Diffusion~1.5 training era (late 2022), they provide a conservative practical control group for this experiment.

\begin{figure}[h]
  \centering
  \begin{subfigure}[b]{0.48\columnwidth}
      \centering
      \includegraphics[width=\linewidth]{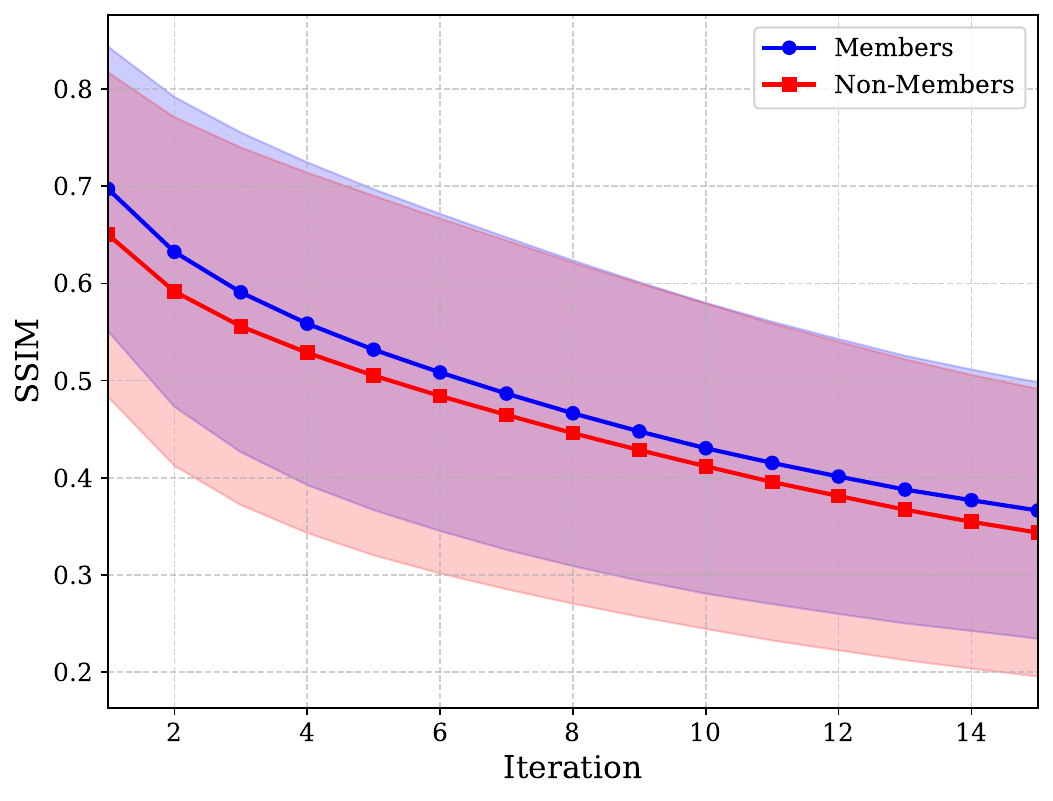}
      \caption{SSIM $(\uparrow)$}
  \end{subfigure}
  \begin{subfigure}[b]{0.48\columnwidth}
      \centering
      \includegraphics[width=\linewidth]{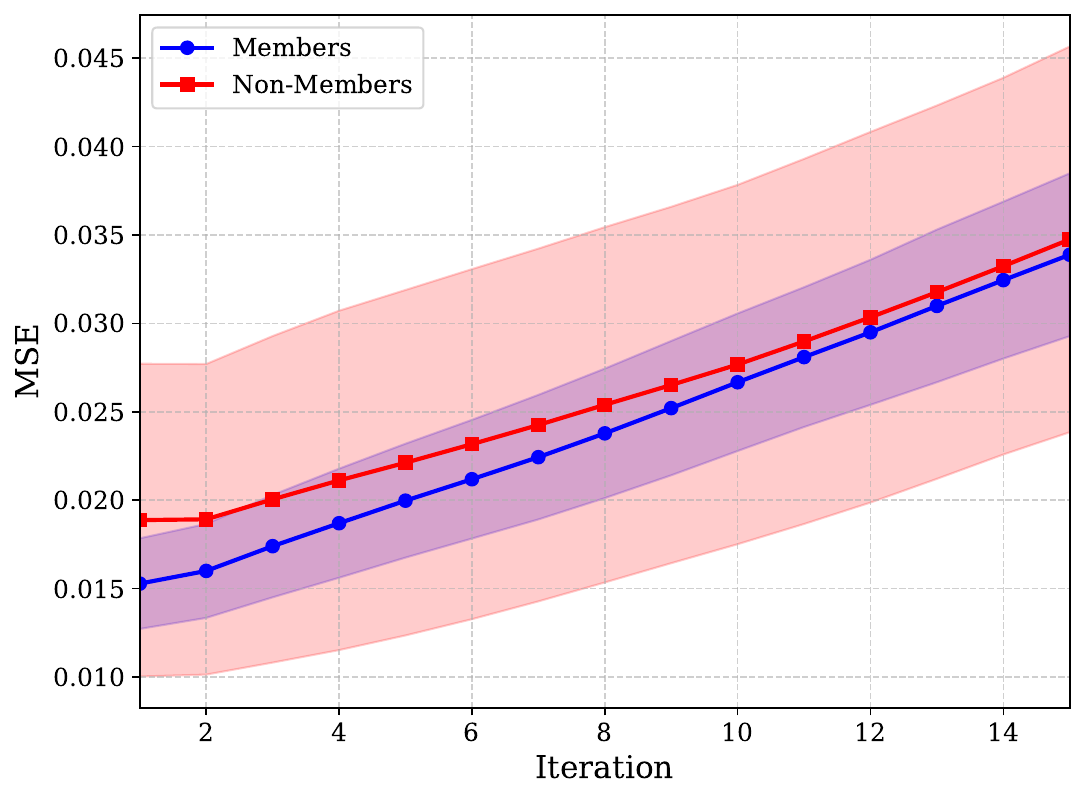}
      \caption{Reconstruction Error (MSE) $(\downarrow)$}
  \end{subfigure}
  \caption{Evolution of (a) SSIM and (b) Reconstruction Error over 15 chained regeneration steps. The solid lines indicate the mean values for training members (blue) and held-out non-members (red), with shaded regions representing standard deviation. Across both metrics, members exhibit higher structural fidelity and slower degradation than non-members.}
  \label{fig:getty}
\end{figure}

For each pool, we run the same chained-regeneration procedure for 15 iterations and summarize the trajectories with SSIM and reconstruction error (MSE) (see \Cref{fig:getty}). The SSIM plot measures whether regenerations remain structurally closer to the initial query for the member pool than for the non-member pool. The MSE plot provides a complementary pixel-level view across regeneration depth by measuring how quickly reconstructed samples drift away from their reference images. In our experiments, the two pools remain visibly separated under both SSIM and MSE. We do not use FID in this case, because it is very unstable on 2{,}000-image pools. We still interpret MSE conservatively: it is sensitive to low-level reconstruction error rather than semantic fidelity alone. For this reason, we use MSE as a stable auxiliary trajectory measure across iterations, while SSIM remains the more directly interpretable structural signal in this case study.

\section{Proofs for Section~\ref{sec:general-theory}}
\label{app:proofs-general-theory}

\subsection{Proof of Theorem~\ref{thm:trajectory-amplification}}

\noindent\textit{Proof.}
By definition,
\[
\E[S_T\mid M=m]=\frac1T\sum_{t=0}^{T-1}\E[\phi_t\mid M=m],\qquad m\in\{0,1\}.
\]
Hence
\[
\E[S_T\mid1]-\E[S_T\mid0]
=\frac1T\sum_{t=0}^{T-1}\Big(\E[\phi_t\mid1]-\E[\phi_t\mid0]\Big).
\]
Under A1,
\[
\E[\phi_t\mid1]-\E[\phi_t\mid0]\ge \Delta_t\ge0,\quad \forall t,
\]
so
\[
\E[S_T\mid1]-\E[S_T\mid0]\ge \frac1T\sum_{t=0}^{T-1}\Delta_t\ge0.
\]
Therefore
\[
\Gamma_T:=\big|\E[S_T\mid1]-\E[S_T\mid0]\big|
\ge \frac1T\sum_{t=0}^{T-1}\Delta_t.
\]
For the denominator, A3 gives, for each class $m$,
\[
\Var(S_T\mid M=m)\le C\frac{\sigma^2\tau_{\mathrm{eff}}}{T}.
\]
Hence
\[
\max_m \Var(S_T\mid M=m)\le C\frac{\sigma^2\tau_{\mathrm{eff}}}{T}.
\]
Combining with the lower bound on $\Gamma_T$,
\[
\mathrm{SNR}^2(S_T)
=\frac{\Gamma_T^2}{\max_m\Var(S_T\mid M=m)}
\ge
\frac{\left(\frac1T\sum_{t=0}^{T-1}\Delta_t\right)^2}
{C\sigma^2\tau_{\mathrm{eff}}/T}.
\]
This proves Theorem~\ref{thm:trajectory-amplification}. \qed

\subsection{Proof of Corollary~\ref{cor:exp-leakage}}

\noindent\textit{Proof.}
Assume $\Delta_t=\Delta_0e^{-t/\tau_g}$. Then
\[
\frac1T\sum_{t=0}^{T-1}\Delta_t
=\frac{\Delta_0}{T}\sum_{t=0}^{T-1}e^{-t/\tau_g}
=\frac{\Delta_0}{T}\cdot\frac{1-e^{-T/\tau_g}}{1-e^{-1/\tau_g}}.
\]
Since $1-e^{-u}\le u$ for $u>0$, with $u=1/\tau_g$ we get
\[
1-e^{-1/\tau_g}\le \frac1{\tau_g}
\quad\Longrightarrow\quad
\frac1{1-e^{-1/\tau_g}}\ge \tau_g.
\]
Therefore
\[
\frac1T\sum_{t=0}^{T-1}\Delta_t
\ge
\Delta_0\frac{1-e^{-T/\tau_g}}{T/\tau_g}.
\]
By Theorem~\ref{thm:trajectory-amplification},
\[
\mathrm{SNR}^2(S_T)\ge
\frac{\left(\frac1T\sum_{t=0}^{T-1}\Delta_t\right)^2}
{C\sigma^2\tau_{\mathrm{eff}}/T}
\;\gtrsim\;
\frac{\Delta_0^2\tau_g}{\sigma^2\tau_{\mathrm{eff}}}\,g(x),
\]
where
\[
g(x):=\frac{(1-e^{-x})^2}{x},\qquad x:=T/\tau_g,
\]
and $\gtrsim$ absorbs only $T$-independent constants (including $1/C$ and comparability constants).

To optimize the shape in $x$, differentiate:
\[
g'(x)=\frac{(1-e^{-x})\big(2xe^{-x}-(1-e^{-x})\big)}{x^2}.
\]
For $x>0$, critical points satisfy
\[
2xe^{-x}=1-e^{-x}
\quad\Longleftrightarrow\quad
e^x=2x+1.
\]
This has a unique positive solution $x^\star\approx1.2564$, so the surrogate shape is maximized at
\[
T^\star\approx x^\star\tau_g\approx1.2564\,\tau_g.
\]
\qed

\subsection{Proof of Corollary~\ref{cor:sqrt-kappa} (shape-constant clarification)}

\noindent\textit{Proof.}
From the previous corollary (under the same comparability regime),
\[
\mathrm{SNR}^2(S_{T^\star})
\gtrsim
\frac{\Delta_0^2\tau_g}{\sigma^2\tau_{\mathrm{eff}}}\,g(x^\star).
\]
Assume additionally
\[
\Gamma_1\asymp \Delta_0,\qquad
\Var(S_1\mid M)\asymp \sigma^2,
\]
so $\mathrm{SNR}(S_1)\asymp \Delta_0/\sigma$. Taking square roots and ratio:
\[
\frac{\mathrm{SNR}(S_{T^\star})}{\mathrm{SNR}(S_1)}
\gtrsim
\sqrt{g(x^\star)}\sqrt{\frac{\tau_g}{\tau_{\mathrm{eff}}}}
=
c_{\mathrm{shape}}\sqrt{\kappa},
\]
where
\[
\kappa:=\frac{\tau_g}{\tau_{\mathrm{eff}}},\qquad
c_{\mathrm{shape}}:=\sqrt{g(x^\star)}\approx0.638.
\]
Thus $c_{\mathrm{shape}}$ is the idealized shape constant; additional model-dependent prefactors remain absorbed by $\gtrsim$.
\qed

\subsection{Additional comments on Bayes-cap statement at the end of Section~\ref{sec:general-theory}}

If membership is deterministic in the initial sample, $M=f(Z_0)$, then $H(M\mid Z_0)=0$, so
\[
I(M;Z_0)=H(M)-H(M\mid Z_0)=H(M).
\]
Also, conditioning on $Z_0$ already determines $M$, hence
\[
H(M\mid Z_0,Z_{1:T})=0=H(M\mid Z_0),
\]
which implies
\[
I(M;Z_{1:T}\mid Z_0)=H(M\mid Z_0)-H(M\mid Z_0,Z_{1:T})=0.
\]
Therefore, by the chain rule for mutual information,
\[
I(M;Z_{0:T})=I(M;Z_0)+I(M;Z_{1:T}\mid Z_0)=I(M;Z_0).
\]
Thus trajectory iteration cannot increase Bayes-optimal information; it can improve practical fixed-form statistics through variance reduction and temporal aggregation.

\end{document}